\definecolor{bl}{RGB}{20,20,150}
\newcommand{\Gcal}{\mathcal{G}}
\newcommand{\Mcal}{\mathcal{M}}
\newcommand{\DBM}{\operatorname{DBM}}
\newcommand{\RBM}{\operatorname{RBM}}
\newcommand{\FF}{\operatorname{FF}}
\newcommand{\bfi}{{\mathbf{i}}}
\newcommand{\bfo}{{\mathbf{o}}}
\newcommand{\bfx}{{\mathbf{x}}}
\newcommand{\bfz}{{\mathbf{z}}}
\newcommand{\bfW}{{\mathbf{W}}}
\newcommand{\bfB}{{\mathbf{b}}}
\newcommand{\mybox}[1]{%
	\setbox0=\hbox{#1\!\!}%
	\setlength{\@tempdima}{\dimexpr\wd0+13pt}%
	\begin{tcolorbox}[colframe=black, boxrule=1pt, arc=4pt,
		left=6pt,right=6pt, top=2pt,bottom=2pt,boxsep=0pt,width=\@tempdima]
		#1
	\end{tcolorbox}
}
\newcommand{\unitlayer}[4]{ 
	\setlength\fboxsep{1pt}
	\mybox{\tikzset{node distance=.4cm, auto}
		\begin{tikzpicture}[scale=0.9, every node/.style={transform shape}]
		\tikzstyle{neuron}=[circle, line width=.75pt, draw=black, inner sep=.025cm, minimum size = .65cm, fill=bl!10]
		
		\foreach \name / \i in {1,...,#2}
		\node[neuron] (I-\name) at (\i,0) {$#1_{#4,\i}$};
		
		\node (I-dots) [node distance = .8cm, right of = I-#2] {$\cdots$};
		\node[neuron] (I-end)  [node distance = .8cm, right of = I-dots] {$#1_{#4,#3}$};
		
		\end{tikzpicture}
	}}
	\tikzstyle{vecArrow} = [thick, decoration={markings,mark=at position
	\tikzstyle{vecArrowGray} = [thick, decoration={markings,mark=at position
\newtheorem{theorem}{Theorem}
\newtheorem{proposition}[theorem]{Proposition}
\newtheorem{corollary}[theorem]{Corollary}
\theoremstyle{definition}
\title{Deep Narrow Boltzmann Machines \\are Universal Approximators}
\author{
Guido Mont\'ufar \thanks{http://personal-homepages.mis.mpg.de/montufar/} \\
Max Planck Institute for Mathematics in the Sciences\\
Inselstrasse 22, 04103 Leipzig, Germany \\
\texttt{montufar@mis.mpg.de} 
}
\begin{document}

\maketitle

\begin{abstract}
We show that deep narrow Boltzmann machines are universal approximators of probability distributions on the activities of their visible units, provided they have sufficiently many hidden layers, each containing the same number of units as the visible layer. We show that, within certain parameter domains, deep Boltzmann machines can be studied as feedforward networks. We provide upper and lower bounds on the sufficient depth and width of universal approximators. These results settle various intuitions regarding undirected networks and, in particular, they show that deep narrow Boltzmann machines are at least as compact universal approximators as narrow sigmoid belief networks and restricted Boltzmann machines, with respect to the currently available bounds for those models. 
\end{abstract}

\section{Introduction} 

It is an interesting question how the representational power of deep artificial neural networks, with several layers of hidden units, compares with that of shallow neural networks, 
with one single layer of hidden units. 
Furthermore, it is interesting how the representational power of layered networks compares in the cases of undirected and directed connections between the layers. 
A basic question in this respect is whether the classes of function approximators represented by the different network architectures can possibly reach any desired degree of accuracy, when endowed with sufficiently many computational units. This property, referred to as {\em universal approximation property}, has been established for a wide range of network architectures, including various kinds of shallow feedforward, shallow undirected, and deep feedforward networks, both in the deterministic and stochastic settings. 
Nevertheless, for deep narrow undirected network architectures, universal approximation has remained so far an open problem. 
In this paper  we prove that deep narrow Boltzmann machines are universal approximators, 
provided they have sufficiently many layers of hidden units, each having at least as many units as the visible layer. 

A Boltzmann machine~\citep{Ackley} is a network of stochastic binary units with undirected pairwise interactions. 
A deep Boltzmann machine (DBM)~\citep{SalHinton07} is a Boltzmann machine whose units build a stack of layers, 
where only pairs of units from subsequent layers interact, and only the units in the bottom layer are visible. 
The units within any given layer are conditionally independent, given the states of the units in the adjacent layers. 
Figure~\ref{figure:DBM} illustrates this architecture. 

Since the first appearance of DBMs, 
a number of papers have addressed various practical and theoretical aspects of these networks, especially regarding training and estimation~\citep[see][]{montavon2012deep,DBLP:journals/corr/abs-1301-3568,ChoDBM}. 
The undirected nature of DBMs leads to interesting and desirable properties, but it also brings with it challenges in training and analyzing them. 
A number of anticipated properties of DBMs still are missing formal verification. 
We prove that narrow DBMs have the universal approximation property. 
We focus on DBMs with layers of constant size. 
We note that, in order to obtain the universal approximation property, the first hidden layer must have at least the same size as the visible layer (minus one, when this is even). 
As a direct corollary of our main theorem, we obtain the universal approximation of conditional probability distributions on the activations of subsets of visible units, 
given the activations of the remaining visible units. 
Our analysis applies not only to binary units, but also to softmax (finite-valued) units. 

At an intuitive level, undirected networks are expected to be more powerful than their directed counterparts, since ``they allow information to flow both ways.'' 
Given that narrow deep belief networks (DBNs)~\citep{Hinton:2006:FLA:1161603.1161605} have the universal approximation property~\citep{Hinton:2008}, 
the natural expectation is that narrow DBMs also have the universal approximation property (DBNs can be regarded as the directed counterparts of DBMs). 
There are several reasons why this intuition is not straightforward to verify. 
While feedforward networks can be studied in a sequential way, with the output of any given layer being the input of the next layer, 
in the undirected case, each internal layer receives inputs from both the previous and the next layers. 
This renders recurrent signals between all units and complicates a sequential analysis. 
The key component of our proof lies in showing that, within certain parameter regions, a DBM can be regarded as a feedforward network. 
More precisely, the upper part of the network can ``neutralize'' the upward signals arriving from the lower part of the network, 
in such a way that the visible probability distributions represented by the entire network have the same form as those represented by a DBN. 
This allows us to analyze the representational power of DBMs in a sequential way and show that, 
in some well defined sense, DBMs are at least as powerful as DBNs. 


\citet*{AISTATS2012_MontavonBM12} have also proposed a feedforward perspective on DBMs. 
However, their motivation was different from ours, 
and they used the term ``feedforward'' to refer to a Gibbs sampling pass traversing the network in a feedforward manner, 
rather than to the structure of the joint probability distributions represented by the entire network. 
They showed, experimentally, that a DBM outputs a feedforward hierarchy of increasingly invariant representations.

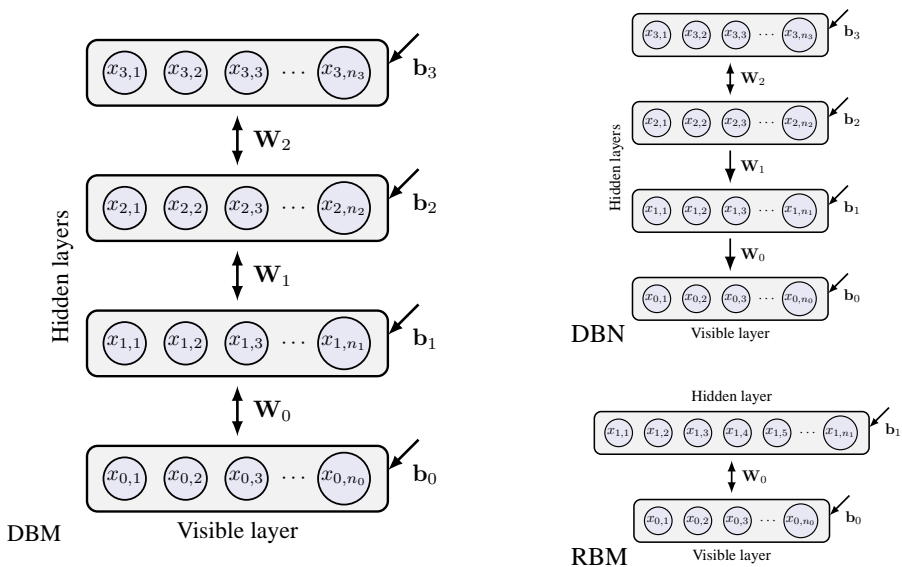
\begin{figure} 
	\centering
	\begin{tabular}{c}
		\tikzset{node distance=2cm, auto}
		\begin{tikzpicture}[scale=0.9, every node/.style={transform shape}]
		\node (H3) {\unitlayer{x}{3}{{n_3}}{3} };
		\node (H2) [below of = H3] {\unitlayer{x}{3}{{n_2}}{2} };
		\node (H1) [below of = H2] {\unitlayer{x}{3}{{n_1}}{1} };
		\node (V) [below of = H1] {\unitlayer{x}{3}{{n_0}}{0} };
		
		\draw[latexnew-latexnew, arrowhead=.2cm, line width=1pt] (H3) to node  {\;$\bfW_2$} (H2);
		\draw[latexnew-latexnew, arrowhead=.2cm, line width = 1pt] (H2) to node  {\;$\bfW_1$} (H1);
		\draw[latexnew-latexnew, arrowhead=.2cm, line width = 1pt] (H1) to node  {\;$\bfW_0$} (V);
		
		\node (h3) [right of = H3, node distance=2.1cm] {};
		\node (h2) [right of = H2, node distance=2.1cm] {};
		\node (h1) [right of = H1, node distance=2.1cm] {};
		\node (h0) [right of = V, node distance=2.1cm] {};
		\node (c3) [node distance=.7cm, right of = h3, above of =h3] {};
		\node (c2) [node distance=.7cm, right of = h2, above of =h2] {};
		\node (c1) [node distance=.7cm, right of = h1, above of =h1] {};
		\node (c0) [node distance=.7cm, right of = h0, above of =h0] {};
		\draw[-latexnew, arrowhead=.2cm,  line width = 1pt] (c3) to node {$\bfB_3$} (h3);
		\draw[-latexnew, arrowhead=.2cm,  line width = 1pt] (c2) to node {$\bfB_2$} (h2);
		\draw[-latexnew, arrowhead=.2cm,  line width = 1pt] (c1) to node {$\bfB_1$} (h1);
		\draw[-latexnew, arrowhead=.2cm,  line width = 1pt] (c0) to node {$\bfB_0$} (h0);									
		
		\node (hid) [node distance=2.5cm, left of = H1] {\begin{rotate}{90}Hidden layers\end{rotate}};
		\node (vis) [node distance=.8cm, below of = V] {Visible layer};
		\node (titl) [node distance=3cm, left of = vis] {DBM};
		\end{tikzpicture}
	\end{tabular}\qquad\quad
	\begin{tabular}{l}
		\scalebox{.65}{
			\tikzset{node distance=2cm, auto}
			\begin{tikzpicture}[scale=0.9, every node/.style={transform shape}]
			\node (H3) {\unitlayer{x}{3}{{n_3}}{3} };
			\node (H2) [below of = H3] {\unitlayer{x}{3}{{n_2}}{2} };
			\node (H1) [below of = H2] {\unitlayer{x}{3}{{n_1}}{1} };
			\node (V) [below of = H1] {\unitlayer{x}{3}{{n_0}}{0} };
			
			\draw[latexnew-latexnew, arrowhead=.225cm, line width = 1pt] (H3) to node  {\;$\bfW_2$} (H2);
			\draw[-latexnew, arrowhead=.3cm, line width = 1pt] (H2) to node  {\textcolor{black}{\;$\bfW_1$}} (H1);
			\draw[-latexnew, arrowhead=.3cm, line width = 1pt] (H1) to node  {\textcolor{black}{\;$\bfW_0$}} (V);
			
			\node (h3) [right of = H3, node distance=2.1cm] {};
			\node (h2) [right of = H2, node distance=2.1cm] {};
			\node (h1) [right of = H1, node distance=2.1cm] {};
			\node (h0) [right of = V, node distance=2.1cm] {};
			\node (c3) [node distance=.7cm, right of = h3, above of =h3] {};
			\node (c2) [node distance=.7cm, right of = h2, above of =h2] {};
			\node (c1) [node distance=.7cm, right of = h1, above of =h1] {};
			\node (c0) [node distance=.7cm, right of = h0, above of =h0] {};
			\draw[-latexnew, arrowhead=.2cm,  line width = 1pt] (c3) to node {$\bfB_3$} (h3);
			\draw[-latexnew, arrowhead=.2cm,  line width = 1pt] (c2) to node {$\bfB_2$} (h2);
			\draw[-latexnew, arrowhead=.2cm,  line width = 1pt] (c1) to node {$\bfB_1$} (h1);
			\draw[-latexnew, arrowhead=.2cm,  line width = 1pt] (c0) to node {$\bfB_0$} (h0);									
			
			\node (hid) [node distance=2.5cm, left of = H1] {\begin{rotate}{90}Hidden layers\end{rotate}};
			\node (vis) [node distance=.8cm, below of = V] {Visible layer};
			\node (titl) [node distance=3cm, left of = vis] {\scalebox{1.66}{DBN}};
			\end{tikzpicture}
		}\\
		\\
		\scalebox{.65}{
			\tikzset{node distance=2cm, auto}
			\begin{tikzpicture}[scale=0.9, every node/.style={transform shape}]
			\node (H1) {\unitlayer{x}{5}{{n_1}}{1} };
			\node (V) [below of = H1] {\unitlayer{x}{3}{{n_0}}{0} };
			
			\draw[latexnew-latexnew, arrowhead=.225cm, line width = 1pt] (H1) to node  {\;$\bfW_0$} (V);
			
			\node (h1) [right of = H1, node distance=3cm] {};
			\node (h0) [right of = V, node distance=2.1cm] {};
			\node (c1) [node distance=.7cm, right of = h1, above of =h1] {};
			\node (c0) [node distance=.7cm, right of = h0, above of =h0] {};
			\draw[-latexnew, arrowhead=.2cm,  line width = 1pt] (c1) to node {$\bfB_1$} (h1);
			\draw[-latexnew, arrowhead=.2cm,  line width = 1pt] (c0) to node {$\bfB_0$} (h0);									
			
			\node (hid) [node distance=.8cm, above of = H1] {Hidden layer};
			\node (vis) [node distance=.8cm, below of = V] {Visible layer};
			\node (titl) [node distance=3cm, left of = vis] {\scalebox{1.66}{RBM}};
			\end{tikzpicture}
		} 
	\end{tabular}
	\caption{The left panel illustrates the architecture of a DBM with a visible layer of $n_0$ units and three hidden layers of $n_1$, $n_2$, $n_3$ units. 
		Pairs of units form consecutive layers are undirectedly connected. 
		There are no connections between units from the same layer nor between units from non-consecutive layers. 
		The right panel shows the architectures of a DBN and an RBM,  
		which are the directed and the shallow versions of DBMs. 
	}\label{figure:DBM}
\end{figure}

In the remainder of the introduction we comment on a few results that appear helpful for contextualizing the present paper. 
From the network architectures mentioned above (deep, shallow, directed, undirected), 
the most extensively studied ones are the shallow feedforward networks (with one single layer of hidden units). 
A shallow feedforward network is understood as a composition of simple computational units, all having the same inputs. 
It is well known that, by tuning the parameters of the individual units, 
these networks can approximate any function on the set of inputs arbitrarily well,\footnote{Meant are reasonably well behaved functions and reasonable measures of approximation.} 
provided they have sufficiently many hidden units~\citep{Hornik89,Cybenko}. 
In other words, any function can be written, approximately, as a superposition (e.g., linear combination) of simple functions. 
This universal approximation property has been established under very general conditions both on the type of units and the type of functions being approximated~\citep[see, e.g.,][]{leshno1993multilayer,ChenChen1995}. 
See also~\citep{Barron1993,Burger2001235} for works addressing the accuracy of the approximations. 
An interesting recent example are shallow feedforward networks with {\em maxout} units~\citep{DBLP:conf/icml/GoodfellowWMCB13}. 
Besides from standard functions, i.e., deterministic output assignments given the inputs, 
shallow feedforward networks are also capable of approximating stochastic functions arbitrarily well, i.e., probabilistic output assignments given the inputs, 
when constructed with sufficiently many stochastic units. 

Deep neural networks have seen exceptional success in applications in recent years. 
Aiming at a better understanding and development of this success, 
a number of recent papers have addressed the theory of deep architectures~\citep[see][]{BengioExprDeep,journals/jmlr/Baldi12,pascanu2013number,montufar2014number}. 
It is not so long ago that~\citet{Hinton:2008} investigated deep belief networks (DBNs) \citep{Hinton:2006:FLA:1161603.1161605} with narrow layers of stochastic binary units (all having about the same number of units). 
They showed that these architectures can approximate any binary probability distribution on the states of their visible units arbitrarily well, provided the number of hidden layers is large enough (exponential in the number of visible units). 
The minimal depth of universal approximators of this kind has been studied subsequently in~\citep{LeRoux2010,Montufar2011,Montufar2014dbn}. 
The approximation properties of DBNs with real-valued visible units and binary hidden units have been treated in recent work as well~\citep{conf/icml/KrauseFGI13}. 

Boltzmann machines~\citep*{hinton:optimal,Ackley,Hinton:1986:LRB:104279.104291} are energy based models describing the statistical behavior of pairwise interacting stochastic binary units. 
They have roots in statistical physics and have been studied intensively as special types of graphical probability models and exponential families. 
In particular, information geometry has provided deep geometric insights about learning and approximation of probability distributions by this kind of networks~\citep{125867}. 
It is well known that Boltzmann machines are universal approximators of probability distributions over the states of their visible units, provided they have sufficiently many hidden units~\citep[see][]{194417,Younes1996109}. 
The situation is more differentiated when a specific structure is imposed on the network, e.g., a layered structure, where only pairs of units in subsequent layers may be connected. 
This imposes non-trivial restrictions on the sets of representable distributions. 
For the shallow layered version of the Boltzmann machine, called restricted Boltzmann machine (RBM)~\citep{Smolensky1986,Freund1992}, 
the universal approximation capability has been shown in~\citep{Freund1992,Younes1996109,LeRoux2008}, provided the hidden layer is large enough 
(having exponentially more units than the visible layer). 
More recently, questions related to the minimal number of hidden units that is sufficient for universal approximation by RBMs have been studied in~\citep{LeRoux2008,Montufar2011,NIPS2011_0307,montufar2013discrete,NIPS2013_5020}.  
Nonetheless, universal approximation results for the deep versions of RBMs, the deep Boltzmann machines (DBMs)~\citep{SalHinton07}, 
have been missing so far (except when the hidden layers have exponentially many more units than the visible layer). \\

This paper is organized as follows. 
In Section~\ref{sec:definitions} we give necessary definitions and fix notations. 
In Section~\ref{sec:universalapproximation} we present our main result: the universal approximation property of narrow DBMs. 
The proof of this result is elaborated in Sections~\ref{sec:compositional} and ~\ref{sec:analysis}. 
In Section~\ref{sec:compositional} we address the compositional structure of DBMs. We express the probability distributions represented by a DBM in terms of the probability distributions represented by two smaller DBMs and a feedforward layer with shared parameters. 
In Section~\ref{sec:analysis} we elaborate an approach to study DBMs from a feedforward perspective. 
We first present a trick to effectively disentangle the shared parameters between intermediate marginal distributions and lower conditional distributions. 
This is followed by a feedforward analysis proving the universal approximation property. 
In Section~\ref{sec:discussion} we offer a discussion of the result.

\section{Definitions}
\label{sec:definitions}

In this section we fix notation and technical details. 
A deep Boltzmann machine with $L+1$ layers of $n_0,n_1,\ldots, n_L$ units is a model of joint probability distributions of the form 
\begin{multline}
p_{\bfW,\bfB}(\bfx_0,\bfx_1\ldots,\bfx_L) = \frac{1}{Z(\bfW,\bfB)}\exp( \sum_{l=0}^{L-1}  \bfx_{l}^\top \bfW_{l} \bfx_{l+1}  + \sum_{l=0}^L \bfx_l^\top {\bfB_l}), \\
\quad\text{for all $(\bfx_0^\top ,\ldots, \bfx_L^\top )^\top \in\{0,1\}^{n_0+\cdots+n_L}$} . 
\label{eq:defjoint}
\end{multline}
Here  $\bfx_l=(x_{l,1},\ldots, x_{l,n_l} )^\top \in\{0,1\}^{n_l}$ denotes the joint state of the units in the $l$-th layer and 
$(\bfx_0^\top ,\ldots, \bfx_L^\top )^\top\in\{0,1\}^N$, $N=\sum_{l=0}^L n_l$, the joint state of all units. 
The parameters of this model are 
$\bfW=\{ \bfW_0,\ldots, \bfW_{L-1}\}$ and $\bfB=\{ \bfB_0,\ldots,\bfB_L\}$, where $\bfW_l \in \mathbb{R}^{n_l\times n_{l+1}}$ is a matrix of interaction weights between units from the $l$-th and $(l+1)$-th layers, for $l=0,\ldots, L-1$, and $\bfB_l\in\mathbb{R}^{n_l}$ is a vector of biases for the units in the $l$-th layer, for $l=0,\ldots,L$.  
The function $Z(\bfW,\bfB)$ is defined in such a way that the entries of $p_{\bfW,\bfB}$ add to one.  

The set of all probability distributions of the form~\eqref{eq:defjoint}, for all choices of $\bfW$ and $\bfB$, 
is a smooth manifold (an exponential family) of dimension $\sum_{l=0}^{L-1}n_l n_{l+1} + \sum_{l=0}^L n_l$. 
This manifold is embedded in the $(2^N-1)$-dimensional set $\Delta_{N}$ of all possible probability distributions over $\{0,1\}^N$. 
%
Note that every probability distribution of the form~\eqref{eq:defjoint} is strictly positive, 
meaning that it assigns strictly positive probability to every state. 
We denote this model by $\DBM_{n_0,\ldots,n_L}$, or $\DBM$, for simplicity, when $n_0,\ldots, n_L$ are clear. 

The marginal probability distributions over the joint states $\bfx_0$ of the units in the bottom layer are obtained by marginalizing out $\bfx_1, \ldots, \bfx_L$: 
\begin{equation}
p_{\bfW,\bfB}(\bfx_0) = \sum_{\bfx_1,\ldots, \bfx_L} p_{\bfW,\bfB}(\bfx_0,\bfx_1,\ldots, \bfx_L), \quad\text{for all $\bfx_0 \in\{0,1\}^{n_0}$}. 
\label{eq:defvis}
\end{equation}
The set of probability distributions of this form, for all $\bfW$ and $\bfB$, is the DBM probability model with a visible layer of $n_0$ units and $L$ hidden layers of $n_1,\ldots, n_L$ units. 
Note that every distribution of the form~\eqref{eq:defvis} is strictly positive. 

In the case that the network has only one hidden layer, $L=1$, as illustrated in the lower right panel of Figure~\ref{figure:DBM}, 
the model reduces to a restricted Boltzmann machine (with $n_0$ visible and $n_1$ hidden units). 
The corresponding set of probability distributions is denoted $\RBM_{n_0,n_1}\equiv\DBM_{n_0,n_1}$. 
If we replace all interactions, except those between the top to layers, by interactions directed towards the bottom layer, we obtain a DBN, illustrated in the upper right panel of Figure~\ref{figure:DBM}. 
We provide more details on RBMs and DBNs in the Supplementary Material. 

\section{Universal Approximation}
\label{sec:universalapproximation}

A set $\Mcal$ of probability distributions on $\{0,1\}^n$ is called {\em universal approximator} when, 
for any distribution $q$ on $\{0,1\}^n$ and any $\epsilon >0$, 
there is a distribution $p$ in $\Mcal$ with $D(q\| p)\leq \epsilon$. 
Here the Kullback-Leibler divergence between $q$ and $p$ is defined as $D(q\| p) : = \sum_{\bfx} q(\bfx) \log \frac{q(\bfx)}{p(\bfx)}$. 
This is never negative and is only zero if $q=p$. 

The main result of this paper is the following: 
\begin{theorem}
	\label{theorem}
	A DBM with a visible layer of $n$ units and $L$ hidden layers of $n$ units each is a universal approximator of probability distributions on the states of the visible layer, 
	provided $L$ is large enough. 
	More precisely, for any $n\leq n':= 2^{k} +k+1$, for some $k\in\mathbb{N}$, 
	a sufficient condition is $L\geq \frac{2^{n'}}{2(n'-\log_2(n')-1)}$. 
	For any $n$ a necessary condition is 
	$L\geq \frac{2^n -(n+1)}{n(n+1)}$.  
\end{theorem}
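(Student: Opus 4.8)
The necessary and the sufficient conditions are proved by unrelated arguments: a dimension count for the former, and a reduction of narrow DBMs to narrow DBNs together with the known universal approximation bounds for DBNs for the latter.

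\emph{Necessity.} By the dimension formula in Section~\ref{sec:definitions}, the joint model $\DBM_{n,\ldots,n}$ with $L$ hidden layers of $n$ units is a smooth manifold (exponential family) of dimension $L n^{2}+(L+1) n = L\,n(n+1)+n$. The visible model is the image of this manifold under the marginalization map~\eqref{eq:defvis}, which is smooth; hence the visible model, and therefore also its closure inside $\Delta_{n}$, has dimension at most $L\,n(n+1)+n$. A universal approximator is dense in $\Delta_{n}$, so its closure equals $\Delta_{n}$, whose dimension is $2^{n}-1$. Therefore $L\,n(n+1)+n\ge 2^{n}-1$, that is, $L\ge\frac{2^{n}-(n+1)}{n(n+1)}$.

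\emph{Sufficiency: compositional reduction.} Summing out $\bfx_{0}$ in~\eqref{eq:defjoint} and writing $a_{i}(\bfx_{1})=(\bfW_{0}\bfx_{1}+\bfB_{0})_{i}$, one checks that the DBM visible marginal factors as
\[ p(\bfx_{0})=\sum_{\bfx_{1}}\Big(\prod_{i}\sigma(a_{i}(\bfx_{1}))^{x_{0,i}}\big(1-\sigma(a_{i}(\bfx_{1}))\big)^{1-x_{0,i}}\Big)\,\tilde q(\bfx_{1}), \]
where the bracketed factor is exactly one feedforward sigmoid layer (the form of a DBN transition) and $\tilde q$ is the visible marginal of the DBM on layers $1,\ldots,L$ with the layer-$1$ bias shifted by the nonlinear field $g(\bfx_{1})=\sum_{i}\log\big(1+\exp(a_{i}(\bfx_{1}))\big)$ produced by the marginalization. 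This is the content of Section~\ref{sec:compositional}: the visible distribution of a DBM equals a feedforward sigmoid layer composed with a shorter DBM, the obstruction being that $(\bfW_{0},\bfB_{0})$ is \emph{shared} between the conditional $\bfx_{1}\mapsto\bfx_{0}$ and the reduced top model through $g$. Iterating this decomposition peels off $L$ sigmoid layers, leaving an RBM on the top two layers, with parameter sharing at every stage.

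\emph{Sufficiency: disentangling and feedforward analysis.} The heart of the argument (Section~\ref{sec:analysis}) is to choose, at each stage, parameters $(\bfW_{0},\bfB_{0})$ that simultaneously (i) make the sigmoid layer carry out the transition required by the narrow-DBN universal approximation construction --- the near-deterministic probability-sharing maps that reroute probability mass among visible configurations --- and (ii) render the induced field $g$ harmless: in a suitable limit of large weights $\log(1+e^{a})\to\max(0,a)$, and one picks the weights so that on the finitely many configurations that matter this is an affine function of $\bfx_{1}$, hence absorbable into the free layer-$1$ bias of the reduced DBM. One then verifies, stage by stage, that after this compensation the reduced top model still ranges freely over the distributions needed below, so that the whole DBM visible marginal can be driven, in the limit of parameters, to the visible marginal of an arbitrary narrow DBN with $L$ hidden layers of $n$ units. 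I expect this disentangling to be the main obstacle: the naive parameter sharing couples all layers together and blocks a sequential analysis, and the whole point is to locate a parameter regime in which the coupling degenerates. Granting it, the proof concludes by invoking the universal approximation theorem for narrow DBNs --- which for width $n'=2^{k}+k+1$ gives universality as soon as $L\ge\frac{2^{n'}}{2(n'-\log_{2}n'-1)}$, and in general after rounding $n$ up to the nearest admissible width $n'$ --- together with the density of the strictly positive DBM distributions in $\Delta_{n}$ and the continuity of $D(q\,\|\,\cdot)$ there, which turn limiting equalities into approximations in Kullback--Leibler divergence.
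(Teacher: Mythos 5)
Your necessity argument (dimension count) and your decomposition of the visible marginal into a shared-parameter feedforward layer composed with a shorter DBM are exactly what the paper does (the latter is Proposition~\ref{factorization}, with your softplus field $g$ being the energy of the backward marginal $r$). The gap is in the step you yourself flag as the crux: your mechanism for neutralizing $g$. You propose to take large weights so that $\log(1+e^{a_i(\bfx_1)})\approx\max(0,a_i(\bfx_1))$ and to choose the weights so that $g$ is affine in $\bfx_1$ on the configurations that matter, absorbing it into the layer-$1$ bias of the reduced DBM. This is unsubstantiated and in general false for the weights you need: in the probability-sharing construction the ``copy'' units do contribute approximately linear terms $\tfrac{w}{2}x_{1,i}$, but the units that implement the controlled flips are designed to activate only on specific input patterns, so their contribution to $g$ behaves like a large constant times an indicator of those patterns --- a function that is not affine on the relevant support and cannot be compensated by shifting $\bfB_1$. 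Moreover, your end goal (driving the DBM to the visible marginal of an \emph{arbitrary} narrow DBN of the same size) is stronger than what is needed or what such an argument could deliver.

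The paper resolves the shared-parameter problem differently, and without any structural assumption on the backward signal: it restricts attention to input models of the form $\Delta_{n_1}(S)$, distributions supported on a prescribed set $S$, and proves (Proposition~\ref{kill}) that for \emph{any} strictly positive $r$ one has $\overline{r\ast\Mcal}\supseteq\Delta_{n_1}(S)$ whenever $\overline{\Mcal}\supseteq\Delta_{n_1}(S)$ --- the upper DBM simply approximates the renormalized quotient $s/r$, which still lies in $\Delta_{n_1}(S)$, rather than $s$ itself. This is why the induction must carry the hypothesis that the upper network approximates \emph{all} of $\Delta_{n_1}(S^l)$ (Proposition~\ref{proposition:directed}), not just the particular distribution needed at the next stage; your bias-absorption compensation has no analogue of this flexibility. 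The induction is then seeded by the RBM result (Proposition~\ref{proposition:RBM}) and advanced by the feedforward support-extension step (Proposition~\ref{proposition:sharing}), yielding $S^1\subset\cdots\subset S^L=\{0,1\}^n$ with $L=2^{n-1}/2^k$ for $n=2^k+k+1$, which gives the stated sufficient bound. To repair your proof you would either have to verify affineness of $g$ on the support for every step of the sharing construction (which fails as described), or replace the bias-absorption idea by the Hadamard-division argument of Proposition~\ref{kill}.
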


A direct implication of this result is the universal approximation property for conditional probability distributions of a subset of visible units, given the states of the remaining visible units. 

\begin{corollary}
	\label{corollary:stochastic}
	A DBM with a visible layer of $n$ units and $L$ hidden layers of $n$ units each is a universal approximator of stochastic input-output maps with inputs $(x_{0,1},\ldots, x_{0,k})\in\{0,1\}^k$ and outputs $(x_{0,k+1},\ldots, x_{0,n})\in\{0,1\}^{n-k}$, for any $1\leq k\leq n$, provided $L$ is as in Theorem~\ref{theorem}. 
\end{corollary}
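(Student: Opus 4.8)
The plan is to deduce Corollary~\ref{corollary:stochastic} from Theorem~\ref{theorem} by a standard conditioning argument, using the chain rule for the Kullback--Leibler divergence together with the fact that every visible distribution realizable by the DBM is strictly positive (cf.\ equation~\eqref{eq:defvis}), so that all conditionals are well defined.

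Write $\bfx_{\mathrm{in}} = (x_{0,1},\ldots,x_{0,k})$ and $\bfx_{\mathrm{out}} = (x_{0,k+1},\ldots,x_{0,n})$. Fix an arbitrary target stochastic map $q(\bfx_{\mathrm{out}}\mid\bfx_{\mathrm{in}})$ and an $\epsilon>0$. First I would introduce the uniform input distribution $r(\bfx_{\mathrm{in}}) := 2^{-k}$ on $\{0,1\}^k$ and form the joint target distribution $q(\bfx_0):= r(\bfx_{\mathrm{in}})\,q(\bfx_{\mathrm{out}}\mid\bfx_{\mathrm{in}})$ on $\{0,1\}^n$. By Theorem~\ref{theorem}, for $L$ as stated there exists a choice of parameters $\bfW,\bfB$ for which the visible distribution $p = p_{\bfW,\bfB}$ of the DBM satisfies $D(q\|p)\leq 2^{-k}\epsilon$.

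Next I would apply the chain rule for the relative entropy,
\begin{equation*}
D(q\|p) = D\big(q(\bfx_{\mathrm{in}})\,\big\|\,p(\bfx_{\mathrm{in}})\big) + \sum_{\bfx_{\mathrm{in}}} q(\bfx_{\mathrm{in}})\, D\big(q(\bfx_{\mathrm{out}}\mid\bfx_{\mathrm{in}})\,\big\|\,p(\bfx_{\mathrm{out}}\mid\bfx_{\mathrm{in}})\big),
\end{equation*}
where $p(\bfx_{\mathrm{in}})$ and $p(\bfx_{\mathrm{out}}\mid\bfx_{\mathrm{in}})$ denote the input marginal and the output conditional of $p$, both well defined and strictly positive because $p$ is strictly positive. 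Since the first summand is nonnegative and $q(\bfx_{\mathrm{in}}) = 2^{-k}$ for every $\bfx_{\mathrm{in}}$, each term in the sum is bounded by $D(q\|p)$, giving $2^{-k}\, D\big(q(\bfx_{\mathrm{out}}\mid\bfx_{\mathrm{in}})\,\|\,p(\bfx_{\mathrm{out}}\mid\bfx_{\mathrm{in}})\big)\leq 2^{-k}\epsilon$, hence $D\big(q(\bfx_{\mathrm{out}}\mid\bfx_{\mathrm{in}})\,\|\,p(\bfx_{\mathrm{out}}\mid\bfx_{\mathrm{in}})\big)\leq\epsilon$ uniformly over all inputs $\bfx_{\mathrm{in}}\in\{0,1\}^k$. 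Since $q$ and $\epsilon$ were arbitrary, the collection of output conditionals representable by the DBM approximates every stochastic input-output map arbitrarily well, which is the assertion of the corollary.

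This argument presents no genuine obstacle beyond Theorem~\ref{theorem} itself; the only points that require care are that the reference input distribution used to build the joint target must have full support (the uniform choice is convenient, but any fully supported $r$ works, with the final conditional bound scaled by $1/\min_{\bfx_{\mathrm{in}}} r(\bfx_{\mathrm{in}})$), and that the closeness of the conditionals must be extracted through the chain rule rather than claimed directly from the joint divergence bound.
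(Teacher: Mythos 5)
Your proof is correct and takes essentially the same route as the paper: the paper likewise splits the visible layer as $\bfx_0=(\bfi,\bfo)$ and deduces the corollary from Theorem~\ref{theorem} via the one-to-one correspondence between the joint $p(\bfi,\bfo)$ and the pair $(p(\bfi),p(\bfo|\bfi))$, using that the DBM's visible distributions are strictly positive. Your choice of a uniform input marginal and the chain-rule argument for the Kullback--Leibler divergence simply make explicit the quantitative step that the paper leaves as a ``direct implication.''
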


We note that the number of visible units (minus one when this is even) is the smallest possible number of units in the first hidden layer of a DBM universal approximator. 

\begin{proposition}
	\label{proposition:narrow}
	A DBM with $n_0$ visible units can be a universal approximator only if 
	the first hidden layer has at least 
	$n_1\geq n_0-1$ units, when $n_0$ is even, and at least $n_1\geq n_0$ units, when $n_0$ is odd. 
\end{proposition}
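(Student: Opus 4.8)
The plan is to combine the compositional description of the DBM visible distribution with one hard target distribution, the uniform distribution on the even-weight binary strings. By the compositional decomposition of Section~\ref{sec:compositional}, for any DBM whose first hidden layer has $n_1$ units the visible distribution factors as $p_{\bfW,\bfB}(\bfx_0)=\sum_{\bfx_1\in\{0,1\}^{n_1}}\mu(\bfx_1)\,R_{\bfx_1}(\bfx_0)$, where $\mu$ is a probability distribution on $\{0,1\}^{n_1}$ (depending on all the parameters) and $R_{\bfx_1}$ is the fully factorized distribution on $\{0,1\}^{n_0}$ with natural parameter $\bfB_0+\bfW_0\bfx_1$. Thus every DBM visible distribution is a mixture of at most $2^{n_1}$ product distributions, the natural parameters of the components being the $2^{n_1}$ vertices of an affine image of the cube $\{0,1\}^{n_1}$. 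Since the set of mixtures of at most $m$ product distributions on $\{0,1\}^{n_0}$ (allowing degenerate products, i.e.\ point masses on subcubes) is the continuous image of a compact set, hence compact, the same description applies to every distribution the DBM approximates arbitrarily well: passing to a convergent subsequence writes any such limit as $\sum_{\bfx_1}\mu^{\ast}(\bfx_1)\,R^{\ast}_{\bfx_1}$ with $R^{(k)}_{\bfx_1}\to R^{\ast}_{\bfx_1}$ and each $R^{\ast}_{\bfx_1}$ a (possibly degenerate) product.

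\emph{The bound $n_1\ge n_0-1$.} Let $E\subseteq\{0,1\}^{n_0}$ be the set of even-weight strings, so $|E|=2^{n_0-1}$, and let $q^{\ast}$ be uniform on $E$. A product distribution whose support lies in $E$ must be a point mass, since its support is a subcube and a subcube of positive dimension contains two Hamming neighbours of opposite parity. Hence in any representation $q^{\ast}=\sum_{\bfx_1}\mu^{\ast}(\bfx_1)\,R^{\ast}_{\bfx_1}$ by products, every component of positive weight is a point mass at a distinct element of $E$, so at least $2^{n_0-1}$ of them are needed. With the previous paragraph this forces $2^{n_1}\ge 2^{n_0-1}$, i.e.\ $n_1\ge n_0-1$, which already proves the proposition when $n_0$ is even and the case $n_1\le n_0-2$ when $n_0$ is odd.

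\emph{Ruling out $n_1=n_0-1$ for odd $n_0$.} Suppose $n_0$ is odd and $n_1=n_0-1$. Then the DBM has exactly $2^{n_0-1}=|E|$ components, so approximating $q^{\ast}$ forces all of them to survive in the limit as point masses, bijectively onto $E$; let $\beta\colon\{0,1\}^{n_0-1}\to E$ be this bijection, so $R^{(k)}_{\bfx_1}\to\delta_{\beta(\bfx_1)}$ along the approximating sequence. For a fixed large index $k$, the $i$-th coordinate of $\bfB_0^{(k)}+\bfW_0^{(k)}\bfx_1$ is an affine function $\ell_i$ of $\bfx_1$ with $\ell_i(\bfx_1)>0\iff\beta(\bfx_1)_i=1$ on $\{0,1\}^{n_0-1}$; equivalently, each slice $S_i=\{\bfx_1:\beta(\bfx_1)_i=1\}$ is a linearly separable (threshold) subset of the cube $\{0,1\}^{n_0-1}$, of size $2^{n_0-2}$. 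I would then show this configuration is impossible when $n_0$ is odd: moving along any edge of the cube, the sign vector $([\ell_i(\bfx_1)>0])_{i=1}^{n_0}$ jumps between two distinct elements of $E$, hence across an even number $\ge 2$ of the hyperplanes $\{\ell_i=0\}$, and balancing this edge-crossing count against the bijectivity of $\beta$ onto $E$ produces a contradiction whose sign is governed by the parity of $n_0$ (the case $n_0=3$ can be verified directly). This yields $n_1\ge n_0$ for odd $n_0$ and completes the proof.

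The step I expect to be the real obstacle is this last one. The first two steps are little more than the compositional decomposition together with a one-line support observation; the work is in showing that no bijection $\beta$ can make all $n_0$ coordinate slices $S_1,\dots,S_{n_0}$ simultaneously threshold subsets of $\{0,1\}^{n_0-1}$ when $n_0$ is odd. This is exactly where the parallelotope constraint on the $2^{n_0-1}$ product components (as opposed to $2^{n_0-1}$ unconstrained products) becomes binding, and the crux is to isolate the right parity-sensitive invariant, such as a signed cell count over the hyperplane arrangement, a parity-of-permutation argument, or an edge-boundary estimate for balanced threshold sets.
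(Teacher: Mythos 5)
Your route is the same as the paper's: write the visible distribution as a mixture of the $2^{n_1}$ product conditionals $p(\bfx_0|\bfx_1)$ whose natural parameters $\bfB_0+\bfW_0\bfx_1$ range over an affine image of the cube, and use a parity distribution as the hard target. Your first half is fine and even a bit more self-contained than the paper's: the observation that a product distribution supported inside the even-weight set must be a point mass (its support is a subcube, and any positive-dimensional subcube contains two adjacent, hence opposite-parity, states) correctly yields $2^{n_1}\geq 2^{n_0-1}$, i.e.\ $n_1\geq n_0-1$; the paper gets this by citing \citet{Montufar2010a}. Your compactness/subsequence argument for passing to limits of the approximating sequence is also sound.

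The gap is the odd-$n_0$ case, and you have named it yourself without closing it. After the (correct) reduction, what must be ruled out is exactly this: a bijection $\beta\colon\{0,1\}^{n_0-1}\to E$ such that every coordinate slice $S_i=\{\bfx_1\colon\beta(\bfx_1)_i=1\}$ is a threshold subset of the $(n_0-1)$-cube --- equivalently, an $(n_0-1)$-generated zonoset with a point in every even orthant of $\mathbb{R}^{n_0}$. This is precisely the nontrivial statement the paper imports from \citet[Proposition~3.19]{MontufarMorton2012}, and your proposal offers only a hoped-for argument (``balancing this edge-crossing count against the bijectivity of $\beta$''). As sketched, it does not obviously work: the fact that every cube edge is crossed by an even number ($\geq 2$) of the hyperplanes $\{\ell_i=0\}$ gives a lower bound of $2(n_0-1)2^{n_0-2}$ on the total crossing count, but there is no useful upper bound to play it against --- balanced threshold slices can cut far more than $2^{n_0-2}$ edges each (majority-type slices cut on the order of $\sqrt{n_0}\,2^{n_0-2}$), so naive counting never contradicts anything, and the parity of $n_0$ enters nowhere in such an estimate. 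One genuinely needs a parity-sensitive invariant of affine images of the cube (sign vectors of zonotopes / oriented-matroid type arguments), which is the content of the cited result; checking $n_0=3$ by hand does not substitute for it. So the proposal reduces the proposition correctly to the right geometric lemma but does not prove that lemma, whereas the paper's proof rests on it by citation.
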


Furthermore, Theorem~\ref{theorem} can be extended to softmax units with any finite number of states. 

\begin{theorem}
	\label{theoremdiscrete}
	A DBM with a visible layer of $n$ softmax $q$-valued units and $L$ hidden layers of $n$ softmax $q$-valued units each is a universal approximator of probability distributions on the states of the visible layer, provided $L$ is large enough. 
	More precisely, for any $n\leq n':= q^{k} +k+1$, for some $k\in\mathbb{N}$, a sufficient condition is 
	$L\geq 1+ \frac{q^{n'} -1}{q(q-1) (n' - \log_q(n')-1)}$. 
	For any $n$ a necessary condition is 
	$L\geq \frac{q^n -1}{n(q-1)(n(q-1)+2)}$.  
\end{theorem}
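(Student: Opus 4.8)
The plan is to carry the proof of Theorem~\ref{theorem} over to softmax $q$-valued units by encoding each such unit one-hot as a block in $\{0,1\}^{q}$, so that the visible configuration space $\{0,1\}^{n}$ is replaced by $\{0,1,\ldots,q-1\}^{n}$, of cardinality $q^{n}$, and the probability simplex has dimension $q^{n}-1$. First I would re-derive the compositional decomposition of Section~\ref{sec:compositional}: the energy of a softmax DBM is again a sum of layer-wise bilinear terms $\bfx_{l}^{\top}\bfW_{l}\bfx_{l+1}$ and bias terms $\bfx_{l}^{\top}\bfB_{l}$ over one-hot blocks, so marginalizing out a contiguous stack of layers still yields a product of a lower DBM marginal, a feedforward conditional whose parameters are shared with the biases of the intermediate marginal, and an upper DBM marginal; the only change is that each single-unit normalizer is now a sum of $q$ exponentials. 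The disentangling trick of Section~\ref{sec:analysis}, which decouples these shared parameters, is purely a statement about the algebraic form of the factors and transcribes unchanged.

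\textbf{Feedforward analysis.} With the decomposition in place, within the relevant parameter region the upper part of the DBM still neutralizes the upward signal, so the visible marginal coincides with that of a softmax DBN with the same lower parameters; it therefore suffices to re-run the feedforward analysis of Section~\ref{sec:analysis}, which proves a DBN-type universal approximation, now with $q$-valued ``sharing steps''. The building block is a feedforward layer computing $p(\bfx_{l}\mid\bfx_{l+1})=\prod_{i}\operatorname{softmax}(\bfB_{l}+\bfW_{l}\bfx_{l+1})_{i}$; for large weights this is arbitrarily close to a deterministic map, and choosing it to agree with the identity outside a small controlled family of configurations realizes a near-deterministic transfer of probability mass among configurations that differ only in a few softmax coordinates. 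Designating $k$ units as an address carrying $q^{k}$ values and the remaining units as payload — which motivates the parameter $n'=q^{k}+k+1$ — one such layer reassigns the mass of on the order of $q(q-1)(n'-\log_{q}(n')-1)$ configurations, so $\big\lceil (q^{n'}-1)/(q(q-1)(n'-\log_{q}(n')-1))\big\rceil$ such layers, plus one additional layer to convert the DBM's top-layer distribution into the starting distribution of the sharing chain (the ``$1+$'' in the statement), bring the visible distribution to within any $\epsilon$ in Kullback--Leibler divergence of an arbitrary target on $q^{n'}$ states; padding the unused coordinates reduces the case $n\le n'$ to $n'$.

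\textbf{Lower bound.} For the necessary condition I would count dimensions. The model $\DBM_{n,\ldots,n}$ with $q$-valued units and $L$ hidden layers is the image of an analytic map whose domain is an exponential-family parameter space of dimension $L n^{2}(q-1)^{2}+(L+1)n(q-1)$: after removing the gauge redundancy of the one-hot encoding, each interaction matrix contributes $n^{2}(q-1)^{2}$ and each of the $L+1$ bias vectors contributes $n(q-1)$. The closure of such an image is a semialgebraic set of at most that dimension, so if it is smaller than $\dim\Delta=q^{n}-1$ the model has Lebesgue measure zero in the simplex and cannot be a universal approximator. Solving the resulting inequality for $L$ gives a necessary condition of the stated form.

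\textbf{Main obstacle.} The hard part is the bookkeeping in the feedforward analysis: extracting the factors $q(q-1)$ and $n'-\log_{q}(n')-1$ requires a careful choice of which configurations a single softmax layer re-weights and of how the address units are recycled between successive steps, and one must check that the near-determinism of the softmax conditionals, whose normalizers now sum $q$ exponentials, can be made uniform so the $\epsilon$-control of each step survives, and that the upper part of the DBM can still be driven to the prescribed near-point-mass response needed to neutralize the $q$-valued upward signal. The remaining ingredients are a routine transcription of the binary argument.
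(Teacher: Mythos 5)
Your proposal takes essentially the same route as the paper: the compositional decomposition and the backward-signal neutralization carry over verbatim to finite-valued units, the sufficient depth comes from the $q$-ary feedforward/probability-sharing analysis, and the necessary condition comes from parameter counting. The one difference is that the "main obstacle" you identify (the $q$-ary sharing bookkeeping yielding the factors $q(q-1)$ and $n'-\log_q(n')-1$) is not re-derived in the paper but imported from the existing analysis of narrow belief networks with discrete units \citep{Montufar2014dbn}, and the stated necessary bound follows from your dimension count $Ln^2(q-1)^2+(L+1)n(q-1)\geq q^n-1$ after the harmless relaxation $(L+1)n(q-1)\leq 2Ln(q-1)$ for $L\geq 1$.
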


The proof of these statements is elaborated in the next two sections. 
First we discuss the compositional structure of DBMs and then we present a feedforward analysis.

\section{Compositional Structure}
\label{sec:compositional}

In this section we take a look at the compositional structure of DBMs. 
We will regard a DBM as a composition of two smaller DBMs. 
In order to describe these compositions, we use the renormalized entry-wise (Hadamard) product. 
The Hadamard product of two distributions $r, s \in\Delta_n$ is defined as 
\begin{equation*}
(r\ast s)(\bfz) := r(\bfz) s(\bfz)/ \sum_{\bfz'}r(\bfz') s(\bfz'),\quad \text{for all } \bfz\in\{0,1\}^n. 
\end{equation*} 
In this definition we assume that $r$ and $s$ have at least one non-zero entry in common, 
such that $\sum_{\bfz'}r(\bfz') s(\bfz') \neq 0$. 
We write $r\ast\Mcal:=\{r\ast s \colon s\in\Mcal \}$ for the set of Hadamard products of a probability distribution $r$ and the elements of a probability model $\Mcal$. 
The Hadamard product is a very natural operation for describing compositions of energy based models. 
Note that, if $r(\bfz) = \frac{1}{Z(f)} \exp(f(\bfz))$ and $s = \frac{1}{Z(g)} \exp(g(\bfz))$, then $(r\ast s)(\bfz) = \frac{1}{Z(f+g)} \exp(f(\bfz) + g(\bfz))$. 

\begin{figure}
	\centering
	\tikzset{node distance=2cm, auto}
	\begin{tikzpicture}[scale=0.9, every node/.style={transform shape}]
	\node (H2) {\unitlayer{x}{3}{{n_3}}{3} };
	\node (H1) [below of = H2] {\unitlayer{x}{3}{{n_2}}{2} };
	\node (V) [below of = H1] {\unitlayer{x}{3}{{n_1}}{1} };
	\draw[latexnew-latexnew, arrowhead=.2cm, line width = 1pt] (H2) to node { } (H1);
	\draw[latexnew-latexnew, arrowhead=.2cm, line width = 1pt] (H1) to node { } (V);
	\node (V1) [below of = V] {\unitlayer{x}{3}{{n_0}}{0} };			
	\draw[latexnew-latexnew, arrowhead=.2cm, line width = 1pt] (V1) to node { } (V);
	\node (ll1) [below of = V, node distance=.6cm] {};
	\node (l1) [left of = ll1, node distance=2.3cm] {};
	\node (l2) [right of = ll1, node distance=3.6cm] {};
	\node (ll2) [above of = H2, node distance=.6cm] {};
	\node (l3) [left of = ll2, node distance=2.3cm] {};
	\node (l4) [right of = ll2, node distance=3.6cm] {};
	
	\node (DBM2) [left of = l3, above of =l3, node distance =.5cm] {$\DBM^{(1)}$};
	
	\coordinate (L1) at (l1);
	\coordinate (L2) at (l2);
	\coordinate (L3) at (l3);
	\coordinate (L4) at (l4);
	
	\draw[rounded corners, thick] (L1)--(L2)--(L4)--(L3)--(L1)--cycle;			
	\node (ll3) [below of = V1, node distance=.6cm] {};
	\node (l5) [left of = ll3, node distance=3.6cm] {};
	\node (l6) [right of = ll3, node distance=2.3cm] {};
	\node (ll4) [above of = V, node distance=.6cm] {};
	\node (l7) [left of = ll4, node distance=3.6cm] {};
	\node (l8) [right of = ll4, node distance=2.3cm] {};
	
	\coordinate (L5) at (l5);
	\coordinate (L6) at (l6);
	\coordinate (L7) at (l7);
	\coordinate (L8) at (l8);
	
	\node (DBM2) [left of = l7, above of =l7, node distance =.5cm] {$\DBM^{(2)}$};
	
	\draw[fill=black!10,thick,rounded corners] (L5)--(L6)--(L8)--(L7)--(L5)--cycle;			
	
	\node (l22) [right of = ll1, node distance=2.2cm] {};
	\node (l77) [left of = ll4, node distance=2.2cm] {};
	\coordinate (L22) at (l22);
	\coordinate (L77) at (l77);
	
	\draw[thick,rounded corners] (L5)--(L6)--(L8)--(L7)--(L5)--cycle;			
	
	\node (V) [below of = H1] {\unitlayer{x}{3}{{n_1}}{1}  };
	\node (V1) [below of = V] {\unitlayer{x}{3}{{n_0}}{0} };			
	\draw[latexnew-latexnew, arrowhead=.2cm, line width = 1pt] (V1) to node { } (V);				
	
	
	\node (oo) [node distance=2.9 cm, right of =H2]{};
	\node (s) [node distance=2.9 cm, right of = V] {$s(\bfx_1)$};								
	\draw[vecArrow] (oo) to (s);
	
	\node (o) [node distance=2.9 cm, left of = V1] {};								
	\node (r) [node distance=2.9 cm, left of = V] {$r(\bfx_1)$};								
	\draw[vecArrowGray] (o) to (r);
	
	\node (ooo)[node distance = 5cm, right of =H2]{};
	\node (p)[node distance = 5cm, right of =V1]{$p(\bfx_0)$};
	\draw[vecArrow] (ooo) to (p);
	\node[fill=white] (rs)[node distance = 5cm, right of =V]{$(r\ast s)(\bfx_1)$};
	
	\draw[rounded corners, thick] (L1)--(L2)--(L4)--(L3)--(L1)--cycle;			
	\end{tikzpicture}
	\caption{Composition of an upper and a lower DBM to form a larger DBM. }\label{figure:decomposition}
\end{figure}
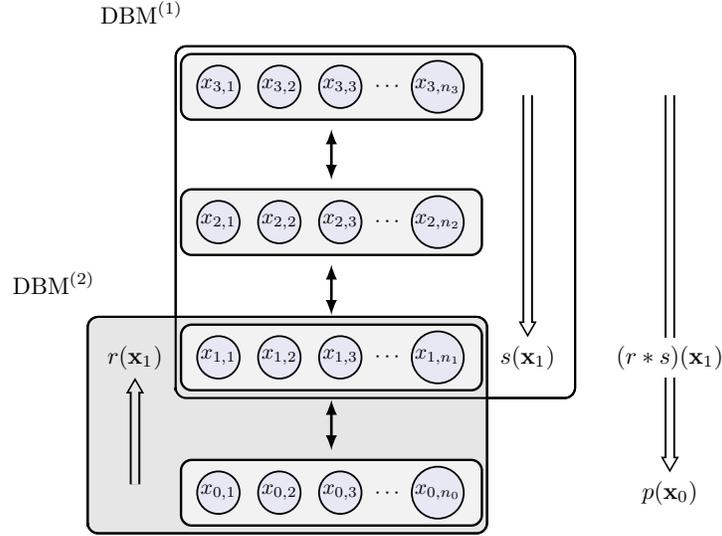

We can write the probability distributions represented by a DBM in terms of the probability distributions represented by two smaller DBMs. 
Such decompositions have been discussed previously by~\citet{Salakhutdinov:2012:ELP:2330716.2330717}. 
We identify the bottom layer of $\DBM^{(1)}$ with the top layer of $\DBM^{(2)}$, 
as illustrated in Figure~\ref{figure:decomposition}. 
By this composition, the distribution $s$ that was originally represented on the bottom layer of $\DBM^{(1)}$ becomes $r\ast s$, 
where $r$ is the distribution that was originally represented on the top layer of $\DBM^{(2)}$: 
\begin{proposition}
	\label{proposition:product}
	Consider the model $\DBM = \DBM_{n_0,\ldots, n_L}$, for some $n_0,\ldots, n_L\in\mathbb{N}$. 
	For any $0<k<L$ the marginal distributions of the $k$-th layer's units are the distributions of the form 
	\begin{equation*}
	p(\bfx_k) = (p^{(2)} \ast p^{(1)})(\bfx_k) ,\quad\text{for all $\bfx_k\in\{0,1\}^{n_k}$}, 
	\end{equation*}	
	where $p^{(1)}(\bfx_k)$ is a bottom layer marginal of $\DBM^{(1)}=\DBM_{n_k,\ldots,n_L}$ and 
	$p^{(2)}(\bfx_k)$ is a top layer marginal of $\DBM^{(2)}=\DBM_{n_0,n_1,\ldots, n_k}$. 
\end{proposition}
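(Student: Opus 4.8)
The plan is to split the energy function of $\DBM_{n_0,\dots,n_L}$ across layer $k$ and exploit the fact that the exponential of a sum factorises. Write $E(\bfx_0,\dots,\bfx_L)=\sum_{l=0}^{L-1}\bfx_l^\top\bfW_l\bfx_{l+1}+\sum_{l=0}^{L}\bfx_l^\top\bfB_l$ for the energy appearing in~\eqref{eq:defjoint}. Every summand of $E$ involves variables from a single layer or from two consecutive layers, so, after fixing any decomposition $\bfB_k=\bfB_k^{(2)}+\bfB_k^{(1)}$ of the layer-$k$ bias, one can write $E=E^{(2)}+E^{(1)}$ with
\begin{align*}
E^{(2)}(\bfx_0,\dots,\bfx_k)&:=\sum_{l=0}^{k-1}\bfx_l^\top\bfW_l\bfx_{l+1}+\sum_{l=0}^{k-1}\bfx_l^\top\bfB_l+\bfx_k^\top\bfB_k^{(2)},\\
E^{(1)}(\bfx_k,\dots,\bfx_L)&:=\sum_{l=k}^{L-1}\bfx_l^\top\bfW_l\bfx_{l+1}+\bfx_k^\top\bfB_k^{(1)}+\sum_{l=k+1}^{L}\bfx_l^\top\bfB_l.
\end{align*}
I would check that $E^{(2)}$ is exactly the energy of $\DBM^{(2)}=\DBM_{n_0,\dots,n_k}$, with weights $\bfW_0,\dots,\bfW_{k-1}$ and layer $k$ playing the role of its top layer, and that $E^{(1)}$ is exactly the energy of $\DBM^{(1)}=\DBM_{n_k,\dots,n_L}$, with weights $\bfW_k,\dots,\bfW_{L-1}$ and layer $k$ playing the role of its bottom (visible) layer. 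The only ingredient the two sub-DBMs share is the layer-$k$ bias, and they share it additively.

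Next I would marginalise. Since none of the variables that get summed out, namely $\bfx_0,\dots,\bfx_{k-1}$ for $\DBM^{(2)}$ and $\bfx_{k+1},\dots,\bfx_L$ for $\DBM^{(1)}$, occurs in both $E^{(2)}$ and $E^{(1)}$, the layer-$k$ marginal of the full DBM factorises:
\[
p(\bfx_k)=\frac1Z\Big(\sum_{\bfx_0,\dots,\bfx_{k-1}}e^{E^{(2)}}\Big)\Big(\sum_{\bfx_{k+1},\dots,\bfx_L}e^{E^{(1)}}\Big).
\]
Introducing the free energies $F(\bfx_k):=\log\sum_{\bfx_0,\dots,\bfx_{k-1}}e^{E^{(2)}}$ and $G(\bfx_k):=\log\sum_{\bfx_{k+1},\dots,\bfx_L}e^{E^{(1)}}$, the first bracket is, up to its normalising constant, the top-layer marginal $p^{(2)}(\bfx_k)=\tfrac1{Z(F)}e^{F(\bfx_k)}$ of $\DBM^{(2)}$ and the second is the bottom-layer marginal $p^{(1)}(\bfx_k)=\tfrac1{Z(G)}e^{G(\bfx_k)}$ of $\DBM^{(1)}$; both are strictly positive (being marginals of the strictly positive joint distributions of $\DBM^{(2)}$ and $\DBM^{(1)}$), so their Hadamard product is well defined. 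The displayed identity for energy-based models ($\tfrac1{Z(f)}e^{f}\ast\tfrac1{Z(g)}e^{g}=\tfrac1{Z(f+g)}e^{f+g}$) then shows that $p(\bfx_k)$, being the normalisation of $e^{F+G}$, equals $(p^{(2)}\ast p^{(1)})(\bfx_k)$.

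Finally I would upgrade this to the claimed equality of two sets of distributions on $\{0,1\}^{n_k}$. The inclusion ``$\subseteq$'' is immediate: given any parameters of the full DBM, taking e.g.\ $\bfB_k^{(2)}=\bfB_k$ and $\bfB_k^{(1)}=0$ in the splitting above writes its layer-$k$ marginal in the claimed form. For ``$\supseteq$'', given a top-layer marginal $p^{(2)}$ of $\DBM^{(2)}$ realised by parameters $(\bfW^{(2)},\bfB^{(2)})$ and a bottom-layer marginal $p^{(1)}$ of $\DBM^{(1)}$ realised by $(\bfW^{(1)},\bfB^{(1)})$, I would assemble full-DBM parameters that inherit $\bfW_l,\bfB_l$ for $l<k$ from $\DBM^{(2)}$, inherit $\bfW_l$ for $l\ge k$ and $\bfB_l$ for $l>k$ from $\DBM^{(1)}$, and set $\bfB_k:=\bfB_k^{(2)}+\bfB_k^{(1)}$; by the computation above the resulting DBM has layer-$k$ marginal exactly $p^{(2)}\ast p^{(1)}$.

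I do not expect a genuine obstacle here; the argument is essentially bookkeeping. The points that need care are: counting the shared layer-$k$ bias only once and splitting it additively (legitimate precisely because $\bfB_k\in\mathbb{R}^{n_k}$ is an unconstrained parameter, so any split is allowed); matching weight matrices and layer indices correctly to the two sub-DBMs $\DBM_{n_0,\dots,n_k}$ and $\DBM_{n_k,\dots,n_L}$; and observing that marginalising a strictly positive member of a DBM exponential family again produces a distribution of the form $\mathrm{const}\cdot e^{(\cdot)}$, which is what allows the Hadamard identity to be applied verbatim.
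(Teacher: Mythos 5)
Your proposal is correct and follows essentially the same route as the paper's proof: split the layer-$k$ bias additively (the paper's $\bfB'_k$ and $\bfB_k-\bfB'_k$), factor the exponential across layer $k$, marginalize the two halves independently, and identify the normalized product as the Hadamard product of the sub-DBM marginals, with the parameter-matching argument giving both inclusions. Your explicit treatment of the ``$\supseteq$'' direction spells out what the paper compresses into ``and vice versa,'' but the substance is identical.
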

For completeness we provide a proof of this statement in the Supplementary Material. 

Next we define the model of conditional probability distributions represented by a feedforward layer. 
For  $n_1$ input units and $n_0$ output units, the feedforward model $\FF_{n_0,n_1}$ consists of all conditional probability distributions of the form 
\begin{equation*}
q_{\bfW_0,\bfB_0} (\bfx_0|\bfx_1) = \frac{1}{Z(\bfW_0\bfx_1 + \bfB_0)}\exp(\bfx_0^\top \bfW_0 \bfx_1 + \bfx_0^\top \bfB_0) , \quad\text{for all $\bfx_0\in\{0,1\}^{n_0}$,  $\bfx_1\in\{0,1\}^{n_1}$}.  
\label{eq:feedforwarddef}
\end{equation*}
Here $\bfW_0\in\mathbb{R}^{n_0\times n_1}$ is a matrix of input weights and $\bfB_0\in\mathbb{R}^{n_0}$ is a vector of biases. 
Clearly, these conditionals correspond exactly to the conditionals represented between first hidden layer and the visible layer of a DBM, for the same choices of parameters. 

The next Proposition~\ref{factorization} gives an expression for the visible distributions represented by a DBM in terms of the distributions represented by two smaller DBMs and the conditionals represented by a feedforward layer with shared parameters. 

\begin{proposition}\label{factorization}
	The bottom layer marginal distributions representable by $\DBM_{n_0,\ldots, n_L}$ are those of the from 
	\begin{equation*}
	p(\bfx_0) = \sum_{\bfx_1} q(\bfx_0|\bfx_1) (r \ast s)(\bfx_1), \quad\text{for all $\bfx_0\in\{0,1\}^{n_0}$},
	\end{equation*}
	where $q(\bfx_0|\bfx_1)r(\bfx_1)$ is a joint probability distribution of the fully observable $\RBM_{n_0,n_1}$
	and $s$ is a bottom layer marginal of $\DBM_{n_1,\ldots, n_L}$. 
\end{proposition}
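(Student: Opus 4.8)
The plan is to peel the visible layer off by conditioning. Because of the layered structure, the conditional law of $\bfx_0$ given all other layers depends only on $\bfx_1$ and is exactly a feedforward conditional, so $p(\bfx_0)=\sum_{\bfx_1}q(\bfx_0\mid\bfx_1)\,p(\bfx_1)$ with $q\in\FF_{n_0,n_1}$; and the first hidden layer marginal $p(\bfx_1)$ is already described, in product form, by Proposition~\ref{proposition:product}. Substituting one into the other will give the asserted formula, and what then remains is purely a matter of identifying parameters: the feedforward $q$ and the top marginal $r$ of $\DBM_{n_0,n_1}$ appearing in Proposition~\ref{proposition:product} automatically share the weight matrix $\bfW_0$ and the bias $\bfB_0$, which is precisely what makes $q(\bfx_0\mid\bfx_1)\,r(\bfx_1)$ a joint distribution of the fully observable $\RBM_{n_0,n_1}$.

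In detail, I would first fix a joint distribution $p_{\bfW,\bfB}$ of the form~\eqref{eq:defjoint} and isolate in the exponent the only terms containing $\bfx_0$, namely $\bfx_0^\top\bfW_0\bfx_1+\bfx_0^\top\bfB_0$. This shows that $p(\bfx_0\mid\bfx_1,\dots,\bfx_L)$ does not depend on $\bfx_2,\dots,\bfx_L$ and equals $q_{\bfW_0,\bfB_0}(\bfx_0\mid\bfx_1)$; summing out $\bfx_2,\dots,\bfx_L$ then gives $p(\bfx_0\mid\bfx_1)=q_{\bfW_0,\bfB_0}(\bfx_0\mid\bfx_1)\in\FF_{n_0,n_1}$, hence $p(\bfx_0)=\sum_{\bfx_1}q_{\bfW_0,\bfB_0}(\bfx_0\mid\bfx_1)\,p(\bfx_1)$. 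Next I would apply Proposition~\ref{proposition:product} with $k=1$ to get $p(\bfx_1)=(r\ast s)(\bfx_1)$, where $r$ is a top layer marginal of $\DBM_{n_0,n_1}=\RBM_{n_0,n_1}$ and $s$ is a bottom layer marginal of $\DBM_{n_1,\dots,n_L}$ (for $L=1$ this is immediate, taking $s$ uniform); all distributions here are strictly positive, so the Hadamard product is well defined. Combining the two displays gives $p(\bfx_0)=\sum_{\bfx_1}q(\bfx_0\mid\bfx_1)\,(r\ast s)(\bfx_1)$.

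It then remains to see that $q$ and $r$ are not free of each other but together form a fully observable $\RBM_{n_0,n_1}$. The joint distribution of the fully observable $\RBM_{n_0,n_1}$ with parameters $\bfW_0,\bfB_0,\bfB_1$ is $\tfrac1Z\exp(\bfx_0^\top\bfW_0\bfx_1+\bfx_0^\top\bfB_0+\bfx_1^\top\bfB_1)$; its $\bfx_0$-conditional is $q_{\bfW_0,\bfB_0}$ (the $\bfx_1^\top\bfB_1$ term cancels) and its $\bfx_1$-marginal is $r(\bfx_1)\propto\exp(\bfx_1^\top\bfB_1)\,Z(\bfW_0\bfx_1+\bfB_0)$, which is exactly the form of a top layer marginal of $\RBM_{n_0,n_1}$. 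Since the $r$ that Proposition~\ref{proposition:product} supplies uses the same $\bfW_0,\bfB_0$ as the feedforward $q$ (both descend from the $\DBM_{n_0,n_1}$ block of the ambient DBM), the pair $q,r$ is of this form, which proves that every visible marginal of $\DBM_{n_0,\dots,n_L}$ has the asserted shape. For the converse I would run the construction backwards: given $q,r$ with $q(\bfx_0\mid\bfx_1)\,r(\bfx_1)$ a fully observable $\RBM_{n_0,n_1}$ joint with parameters $\bfW_0,\bfB_0,\bfB_1'$ and given $s$ a bottom layer marginal of $\DBM_{n_1,\dots,n_L}$ with layer-$1$ bias $\bfB_1''$, the DBM with weights $\bfW_0,\dots,\bfW_{L-1}$ and biases $\bfB_0,\,\bfB_1'+\bfB_1'',\,\bfB_2,\dots,\bfB_L$ has visible marginal $\sum_{\bfx_1}q(\bfx_0\mid\bfx_1)\,(r\ast s)(\bfx_1)$ by the computation above.

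The part I expect to be the main obstacle is the parameter bookkeeping around the first hidden layer: one has to be sure that the layer-$1$ bias of the ambient DBM can be, and is, split between the $\RBM_{n_0,n_1}$ factor (through $r$) and the $\DBM_{n_1,\dots,n_L}$ factor (through $s$) in such a way that, as all such splits and all remaining parameters vary, one sweeps out exactly the family in the statement and nothing more. That is exactly what Proposition~\ref{proposition:product} already encapsulates — it says the Hadamard product of a top marginal of $\RBM_{n_0,n_1}$ with a bottom marginal of $\DBM_{n_1,\dots,n_L}$ is precisely the first hidden layer marginal of the composite DBM — so I would lean on it rather than redo that accounting by hand.
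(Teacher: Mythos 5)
Your proposal is correct and follows essentially the same route as the paper: the law of total probability $p(\bfx_0)=\sum_{\bfx_1}p(\bfx_0|\bfx_1)\,p(\bfx_1)$ combined with Proposition~\ref{proposition:product} applied at the first hidden layer. The additional details you supply (the conditional independence of $\bfx_0$ from the higher layers, the shared parameters $\bfW_0,\bfB_0$ making $q\,r$ a fully observable $\RBM_{n_0,n_1}$ joint, and the converse construction via bias splitting) are exactly what the paper leaves implicit or delegates to Proposition~\ref{proposition:product} and the remark following the proof.
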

\begin{proof}[Proof of Proposition~\ref{factorization}]
	We have 
	\begin{equation*}
	p(\bfx_0) = \sum_{\bfx_1} p(\bfx_0|\bfx_1) p(\bfx_1), \quad\text{for all $\bfx_0\in\{0,1\}^{n_0}$}. 
	\end{equation*}
	By Proposition~\ref{proposition:product}, $p(\bfx_1) = (r\ast s)(\bfx_1)$ for all $\bfx_1\in\{0,1\}^{n_1}$. 
\end{proof}

The proposition is illustrated in Figure~\ref{figure:decomposition}. 
Note that $r(\bfx_1)$ is a top layer marginal of $\RBM_{n_0,n_1}$ and $q(\bfx_0|\bfx_1)$ is the top-to-bottom conditional of $\RBM_{n_0,n_1}$ corresponding to the feedforward layer $\FF_{n_0,n_1}$. 
Proposition~\ref{factorization} suggests that it is possible to study the representational power of DBMs in terms of the representational power of smaller DBMs composed with simple feedforward networks. 
The problem is that the distribution $r\ast s$, intended as the input of the feedforward layer, 
depends on the same parameters $\bfW_0, \bfB_0$ as the feedforward layer. 
Hence the input of the feedforward layer cannot be chosen independently from the transformation that the feedforward layer applies on it. 
Nonetheless, as we will show in the next section, it is possible to resolve this difficulty and analyze the representational power of the DBM in a sequential way.

\section{Feedforward Analysis}
\label{sec:analysis}

Consider a DBM composed of an upper and a lower part, as shown in Figure~\ref{figure:decomposition}. 
If the upper $\DBM^{(1)}$ is able to ``disable'' or neutralize the top layer marginal $r$ of $\DBM^{(2)}$, 
then the distribution represented at the bottom layer of the compound DBM can be regarded as the feedforward pass of the distribution $s$ represented at the bottom layer of $\DBM^{(1)}$. 
Namely, by Proposition~\ref{factorization} the visible distribution of the combined network is the result of passing the marginal distribution $(r\ast s)(\bfx_1)$ feedforward through the conditional distribution $q(\bfx_0 | \bfx_1)$.

\subsection{Disabling the backward signal}

In order to make the feedforward approach work, we need to resolve the problem that the marginal $r$ and the conditional $q$ share the same parameters. 
When we modify these parameters in order to obtain a specific conditional $q$ representing a desired feedforward transformation, 
the marginal $r$ changes as well, and with it also the input $r\ast s$. 
We resolve this dilemma in the following way. 
Instead of regarding the bottom layer marginals of $\DBM^{(1)}$ as the input model, 
we restrict our attention to a subset $\Gcal$ of the bottom layer marginals of $\DBM^{(1)}$ with the following property: 
\begin{equation}
r \ast \Gcal = \Gcal \quad\text{for all top layer marginals $r$ of $\DBM^{(2)}$}. 
\label{eq:stable}
\end{equation}
In this case, any desired input $s\in\Gcal$, together with any desired conditional $q\in\FF_{n_0,n_1}$, can be obtained by the following procedure: 
\begin{enumerate}
	\item Tune the parameters of $\DBM^{(2)}$ to represent any desired (representable) conditional distribution $q$. 
	By tuning the parameters in this way, the top layer marginal of $\DBM^{(2)}$ becomes a distribution $r$ that depends on $q$. 
	\item Tune the parameters of $\DBM^{(1)}$ to represent a bottom layer marginal $s'\in\Gcal$ with $r\ast s' =s$. 
\end{enumerate}
Now we just need to find a good choice of $\Gcal$, from which we require the following. 
\begin{itemize}
	\item 
	The set $\Gcal$ has to satisfy~\eqref{eq:stable}. 
	\item
	We have to make sure that $\Gcal$ is contained in, or can be approximated arbitrarily well, by the distributions representable at the bottom layer of $\DBM^{(1)}$. 
	\item 
	Furthermore, $\Gcal$ should be as large as possible, in order to account for the largest possible fraction of the representational power of $\DBM^{(1)}$. 
\end{itemize}

We choose $\Gcal$ as the set of probability distributions on $\{0,1\}^{n_1}$ 
that assign positive probability only to a subset of vectors $S\subseteq\{0,1\}^{n_1}$, i.e., as the set  
\begin{equation*}
\Delta_{n_1}(S):=\{ p\in\Delta_{n_1} \colon p(\bfx_1)=0\text{ for all } \bfx_1\not\in S  \}. 
\end{equation*}  
In the next Proposition~\ref{kill} we show that this set satisfies the first item of the list, regardless of $S$. 
For the second and third items, we have to choose $S$ depending on the size of $\DBM^{(1)}$. 
We will discuss the details of this further below, in Section~\ref{sec:feedforwardanalysis}. 

Given a set of probability distributions $\Mcal\subseteq\Delta_n$, let $\overline{\Mcal}\subseteq\Delta_n$ denote the set of probability distributions that can be approximated arbitrarily well by elements from $\Mcal$. 
\begin{proposition}\label{kill}
	Let $r\in\Delta_{n}$ be a strictly positive probability distribution and let $\Mcal\subseteq\Delta_n$ be a set of probability distributions with $\overline{\Mcal}\supseteq\Delta_n(S)$. 
	Then $\overline{r\ast\Mcal}\supseteq\Delta_n(S)$. 
\end{proposition}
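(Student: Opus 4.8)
The plan is to show that every $q\in\Delta_n(S)$ is \emph{exactly} of the form $r\ast s^{\ast}$ for an explicitly constructed $s^{\ast}\in\Delta_n(S)$, and then to push a sequence approximating $s^{\ast}$ (which exists by hypothesis) through the Hadamard product with $r$, using that this operation is continuous.

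First I would construct the ``preimage.'' Given $q\in\Delta_n(S)$, define $s^{\ast}\in\Delta_n$ by $s^{\ast}(\bfz):=\frac{q(\bfz)/r(\bfz)}{\sum_{\bfz'}q(\bfz')/r(\bfz')}$. This is well defined: $r$ is strictly positive, and $q$, being a probability distribution, has $\sum_{\bfz'}q(\bfz')/r(\bfz')>0$. Since $s^{\ast}(\bfz)=0$ whenever $q(\bfz)=0$, and in particular for $\bfz\notin S$, we have $s^{\ast}\in\Delta_n(S)$. Substituting $s^{\ast}$ into the definition of $\ast$ gives $r\ast s^{\ast}=q$: for $\bfz\in S$ the factor $r(\bfz)$ in the numerator cancels the $r(\bfz)$ in the denominator of $s^{\ast}$, and the renormalization in the Hadamard product restores exactly $q(\bfz)$; for $\bfz\notin S$ both sides vanish.

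Next I would invoke the hypothesis $\overline{\Mcal}\supseteq\Delta_n(S)$: since $s^{\ast}\in\Delta_n(S)$, there is a sequence $s_k\in\Mcal$ approximating $s^{\ast}$ arbitrarily well. The key analytic point is that the map $s\mapsto r\ast s$ is continuous on all of $\Delta_n$, because $r$ strictly positive forces $\sum_{\bfz'}r(\bfz')s(\bfz')\ge\min_{\bfz'}r(\bfz')>0$ for every $s\in\Delta_n$, so $r\ast s$ is a ratio of continuous functions with non-vanishing denominator. Hence $r\ast s_k\to r\ast s^{\ast}=q$, and each $r\ast s_k$ lies in $r\ast\Mcal$, so $q\in\overline{r\ast\Mcal}$. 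As $q\in\Delta_n(S)$ was arbitrary, $\overline{r\ast\Mcal}\supseteq\Delta_n(S)$.

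The only mild care concerns the precise meaning of ``approximated arbitrarily well.'' If it is measured by the Kullback--Leibler divergence, I would use Pinsker's inequality to turn $D(s^{\ast}\|s_k)\to 0$ into convergence of $s_k$ to $s^{\ast}$ in the simplex, apply the continuity above to get $r\ast s_k\to q$ pointwise, and then note that pointwise convergence together with $(r\ast s_k)(\bfz)\to q(\bfz)>0$ on $\supp(q)$ makes $D(q\|r\ast s_k)$ a finite sum of terms each tending to $0$, hence $D(q\|r\ast s_k)\to 0$. This translation between topologies is the only bookkeeping to watch; the substance — exhibiting the exact preimage $s^{\ast}$ and using continuity of $\ast$, both of which rely on the strict positivity of $r$ — is short, and I do not expect a genuine obstacle.
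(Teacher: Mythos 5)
Your proposal is correct and takes essentially the same route as the paper: both exhibit the exact preimage $s^{\ast}=(q/r)/\sum_{\bfz'}q(\bfz')/r(\bfz')\in\Delta_n(S)$, relying on the strict positivity of $r$, and verify $r\ast s^{\ast}=q$. The only difference is that you make explicit the continuity of $s\mapsto r\ast s$ (and the translation between KL divergence and convergence in the simplex), a step the paper's proof leaves implicit when passing from approximating $s^{\ast}$ by $\Mcal$ to approximating $q$ by $r\ast\Mcal$.
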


\begin{proof}[Proof of Proposition~\ref{kill}]
	Since $\Mcal$ can approximate any distribution from $\Delta_n(S)$ arbitrarily well, it can approximate any distribution of the form $s'(\bfz) = (s / r)(\bfz):= (s(\bfz)/r(\bfz) ) \frac{1}{\sum_{\bfz'}  s(\bfz')/r(\bfz') }$, $\bfz\in\{0,1\}^n$, arbitrarily well, where $s$ is any distribution from $\Delta_n(S)$. 
	Any such $s'$ is contained in $\Delta_n(S)$, as it is strictly supported on $S$. 
	Now, the Hadamard product of $r$ and $s'$ is given by 
	\begin{align*}
	(r\ast s')(\bfz) 
	&= r(\bfz) s'(\bfz) \frac{1}{\sum_{\bfz'}r(\bfz')s'(\bfz')} \\
	&= r(\bfz) (s(\bfz) / r(\bfz)) \frac{1}{\sum_{\bfz''}s(\bfz'')/r(\bfz'')} \frac{1}{\sum_{\bfz'} r(\bfz')s'(\bfz')}\\
	&= s(\bfz) \frac{1}{\sum_{\bfz''}s(\bfz'')/r(\bfz'')} \frac{1}{\sum_{\bfz'} r(\bfz')  (s(\bfz') / r(\bfz')) \frac{1}{\sum_{\bfz'''}s(\bfz''')/r(\bfz''')}}\\
	&= s(\bfz) \frac{1}{\sum_{\bfz''}s(\bfz'')/r(\bfz'')} \frac{\sum_{\bfz'''}s(\bfz''')/r(\bfz''')}{\sum_{\bfz'}  s(\bfz')}\\
	&= s(\bfz), \quad\text{for all $\bfz\in\{0,1\}^n$}.  
	\end{align*}
	Since $s$ was an arbitrary distribution from the set $\Delta_n(S)$, this proves the claim. 
\end{proof}

\subsection{Proof of Theorem~\ref{theorem}}
\label{sec:feedforwardanalysis}

In the previous subsection we have shown that DBMs can be studied from a feedforward perspective. 
Let us make this more explicit. 
Putting Propositions~\ref{factorization} and~\ref{kill} together, we arrive at: 
\begin{proposition}
	\label{proposition:directed}
	If $\DBM_{n_1,\ldots, n_L}$ can approximate every distribution from the set $\Delta_{n_1}(S)$ arbitrarily well as its bottom layer marginal, 
	then $\DBM_{n_0,n_1,\ldots, n_L}$ can approximate every distribution from the set $\FF_{n_0,n_1} (\Delta_{n_1}(S))$ arbitrarily well as its bottom layer marginal. 
\end{proposition}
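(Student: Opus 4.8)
The plan is to assemble Propositions~\ref{factorization} and~\ref{kill} into the sequential (feedforward) argument prescribed by the two-step procedure of Section~\ref{sec:analysis}. Fix an arbitrary conditional $q\in\FF_{n_0,n_1}$ and an arbitrary distribution $t\in\Delta_{n_1}(S)$; the distribution we must approximate is the feedforward pass $p^*(\bfx_0)=\sum_{\bfx_1} q(\bfx_0|\bfx_1)\,t(\bfx_1)$. By Proposition~\ref{factorization}, the bottom layer marginals of $\DBM_{n_0,n_1,\ldots,n_L}$ are precisely the distributions $p(\bfx_0)=\sum_{\bfx_1} q(\bfx_0|\bfx_1)\,(r\ast s)(\bfx_1)$, where $q(\bfx_0|\bfx_1)\,r(\bfx_1)$ is a joint distribution of the fully observable $\RBM_{n_0,n_1}$ and $s$ ranges over the bottom layer marginals of $\DBM_{n_1,\ldots,n_L}$. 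So it suffices to exhibit such a $p$ that is arbitrarily close to $p^*$.

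First I would fix the weight matrix $\bfW_0$ and bias vector $\bfB_0$ of the lowest connections so that the induced top-to-bottom conditional of $\RBM_{n_0,n_1}$ equals $q$ exactly; this is possible because these conditionals are exactly the elements of $\FF_{n_0,n_1}$. The same parameters determine the top layer marginal $r$ of $\RBM_{n_0,n_1}$, which is strictly positive. Now I would apply Proposition~\ref{kill} with $n=n_1$ and $\Mcal$ equal to the set of bottom layer marginals of $\DBM_{n_1,\ldots,n_L}$: the hypothesis of the present proposition states that $\overline{\Mcal}\supseteq\Delta_{n_1}(S)$, and $r$ is strictly positive, so Proposition~\ref{kill} yields $\overline{r\ast\Mcal}\supseteq\Delta_{n_1}(S)\ni t$. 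Hence, for any $\delta>0$ there is a bottom layer marginal $s$ of $\DBM_{n_1,\ldots,n_L}$ with $\|r\ast s-t\|\le\delta$. Crucially, the $r$ used here is the one attached to the already-fixed $q$, so the composition in Proposition~\ref{factorization} is internally consistent.

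It then remains to propagate this approximation through the fixed feedforward layer. The map $u\mapsto \sum_{\bfx_1} q(\cdot|\bfx_1)\,u(\bfx_1)$ from $\Delta_{n_1}$ to $\Delta_{n_0}$ is affine and given by a fixed column-stochastic matrix, hence $1$-Lipschitz in total variation (and Lipschitz in any norm on this finite-dimensional space). Therefore the resulting bottom layer marginal $p(\bfx_0)=\sum_{\bfx_1} q(\bfx_0|\bfx_1)\,(r\ast s)(\bfx_1)$ of $\DBM_{n_0,n_1,\ldots,n_L}$ satisfies $\|p-p^*\|\le\delta$. Since $q$ is strictly positive, $p^*$ is strictly positive, so on a neighbourhood of $p^*$ the Kullback--Leibler divergence $D(p^*\|p)$ is continuous and vanishes at $p=p^*$; choosing $\delta$ small enough therefore makes $D(p^*\|p)\le\epsilon$ for any prescribed $\epsilon>0$. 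As $q\in\FF_{n_0,n_1}$ and $t\in\Delta_{n_1}(S)$ were arbitrary, this shows that $\DBM_{n_0,n_1,\ldots,n_L}$ approximates every distribution in $\FF_{n_0,n_1}(\Delta_{n_1}(S))$ arbitrarily well.

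I do not expect a serious obstacle. The conceptually delicate point --- that the would-be input $r\ast s$ and the feedforward transformation $q$ share the parameters $\bfW_0,\bfB_0$ --- has already been neutralized by restricting the input model to $\Gcal=\Delta_{n_1}(S)$ and by Proposition~\ref{kill}; this is exactly why the argument fixes $q$ (hence $r$) first and only afterwards chooses $s$. The remaining work is bookkeeping: checking that the two approximation steps (approximating $t$ by $r\ast s$, then pushing it through $q$) compose, which needs only Lipschitz continuity of an affine map on a finite simplex, and converting approximation in Euclidean/total-variation distance into approximation in Kullback--Leibler divergence, which is routine because every distribution occurring in the argument is strictly positive.
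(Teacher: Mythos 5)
Your proposal is correct and follows essentially the same route as the paper, which obtains Proposition~\ref{proposition:directed} precisely by combining Proposition~\ref{factorization} with Proposition~\ref{kill} via the two-step procedure (fix the conditional $q$, hence $r$, then choose $s$ so that $r\ast s$ approximates the desired input); your additional Lipschitz/KL-continuity bookkeeping just makes explicit what the paper leaves implicit.
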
 

With this proposition, we can study the representational power of DBMs sequentially, increasing from layer to layer. 
A feedforward layer is able to compute many interesting transformations of its input. 
For any choice of parameters, the conditional distribution $q_{\bfW_0,\bfB_0}$ represented by the feedforward layer $\FF_{n_0,n_1}$ defines a map 
$\Delta_{n_1}\to\Delta_{n_0}$ taking a probability distribution $p$ to a probability distribution $\sum_{\bfx_1} p(\bfx_1) q_{\bfW_0,\bfB_0}(\bfx_0 | \bfx_1)$.  
As we vary the parameters $\bfW_0, \bfB_0$, every input distribution $p$ is mapped to a collection of output distributions. 
Hence the feedforward layer can augment the representational power of the input model. 
After a sufficient number of feedforward layers, the output distribution can be made to approximate any desired probability distribution arbitrarily well. 

We focus on the DBM with layers of constant size $n$. 
First, we need to show that a DBM with $n$ visible units and $l$ hidden layers of $n$ units each can approximate any distribution from $\Delta_n(S^l)$ arbitrarily well, for some $S^l\subseteq\{0,1\}^n$. 
Then, we need to show that by transformations with a feedforward layer, we obtain a larger set $\Delta(S^{l+1}) \subseteq\FF_{n,n}(\Delta(S^l))$, 
which in turn can be approximated arbitrarily well by the DBM with $l+1$ hidden layers. 
The idea is to obtain an increasing sequence 
\begin{equation*}
S^1\subset S^2\subset S^3\subset \cdots\subset S^L=\{0,1\}^n,
\end{equation*} 
meaning that the DBM with $L$ hidden layers can approximate any distribution on $S^L = \{0,1\}^n$ arbitrarily well. 

We start with $l=1$. The representational power of RBMs has been studied in previous papers. 
We use the following Proposition~\ref{proposition:RBM}~\citep[taken from][]{Montufar2011}. 
We call a pair of states $\bfx,\bfx' \in \{0,1\}^n$ {\em adjacent} if their Hamming distance is one, i.e., $d_H(\bfx,\bfx'):=|\{i\in[n]\colon x_i\neq x'_i \}|=1$. 

\begin{proposition}
	\label{proposition:RBM}
	The model $\RBM_{n_0,n_1}$ can approximate every distribution from $\Delta_{n_0}(S)$ arbitrarily well as its bottom marginal, 
	where $S\subseteq\{0,1\}^{n_0}$ is any union of $n_1 + 1$ pairs of adjacent states. 
\end{proposition}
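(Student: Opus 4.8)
The plan is to prove the statement by induction on the number of hidden units $n_1$, showing that a single hidden unit can ``graft'' one further pair of adjacent states onto the support of a distribution that is already realizable in the closure. The starting point is the closed form obtained by summing out the hidden units,
\begin{equation*}
p(\bfx_0)\;\propto\;e^{\bfx_0^\top \bfB_0}\prod_{j=1}^{n_1}\bigl(1+e^{\,c_j+\bfx_0^\top w_j}\bigr),
\end{equation*}
where $w_j$ is the $j$-th column of $\bfW_0$ and $c_j$ the $j$-th hidden bias; adjoining one hidden unit, keeping the others fixed, simply multiplies the unnormalized visible marginal by the factor $h(\bfx_0)=1+e^{c_{n_1}+\bfx_0^\top w_{n_1}}$. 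For the base case $n_1=0$ the model is the independent model $\{p(\bfx_0)\propto e^{\bfx_0^\top\bfB_0}\}$: for any pair $P=\{\bfx_0^\star,\bfx_0^\star\oplus e_i\}$ of adjacent states and any probabilities on its two elements, driving $(\bfB_0)_k\to\pm\infty$ for $k\neq i$ (according to $\bfx_0^\star$) while keeping $(\bfB_0)_i$ at the value that fixes the ratio yields, in the limit, exactly the prescribed distribution on $P$, so its closure contains $\Delta_{n_0}(P)$ for every single adjacent pair.

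For the inductive step, assume $\overline{\RBM_{n_0,n_1-1}}\supseteq\Delta_{n_0}(S')$ whenever $S'$ is a union of $n_1$ pairs of adjacent states, and let $S=S'\cup P$ with $P$ one further pair. Given a target $q\in\Delta_{n_0}(S)$, write $q=(1-\lambda)\mu+\lambda\nu$ with $\lambda=q(P)$, $\mu\in\Delta_{n_0}(S')$, $\nu\in\Delta_{n_0}(P)$ (the cases $\lambda\in\{0,1\}$ being handled directly, since a hidden unit can be disabled). I would first fix parameters of an $\RBM_{n_0,n_1-1}$ whose visible marginal approximates $\mu$ very well; its unnormalized marginal $g$ is strictly positive, close to $\mu$ on $S'$, and tiny (but nonzero) off $S'$. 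Adjoining one hidden unit replaces $g$ by $g\cdot h$. I would then ``sharpen'' $h$ by driving the entries of $w_{n_1}$ indexed away from $P$'s varying coordinate to $\pm\infty$ so that $c_{n_1}+\bfx_0^\top w_{n_1}\to-\infty$ off $P$, hence $h\to1$ off $P$; and I would set the overall scale of $h$ on $P$ (through $c_{n_1}$) and the within-$P$ ratio of $h$ (through the entry of $w_{n_1}$ in $P$'s varying coordinate) so that, after normalization, $g\cdot h$ puts mass $\lambda$ on $P$ distributed according to $\nu$, reproduces $(1-\lambda)\mu$ on $S'$, and has vanishing mass off $S$. This proves $\overline{\RBM_{n_0,n_1}}\supseteq\Delta_{n_0}(S)$ and closes the induction.

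The step I expect to be the \emph{main obstacle} is the bookkeeping of this coupled limit. The value of $g$ on $P$ is nonzero only by an amount that we do not control precisely and that moreover shrinks as the $(n_1-1)$-unit approximation of $\mu$ is improved, so the scale of $h$ on $P$ must be sent to infinity fast enough to compensate, while the sharpening of $h$ off $P$ must be sent to infinity even faster so as not to disturb the mass on $S'$ or leak mass onto the remaining states; one has to exhibit a single parameter schedule realizing all of this at once, and then check that the resulting distributions approach $q$ in Kullback--Leibler divergence---which is automatic here, since every $\RBM$ distribution is strictly positive, so that $D(q\|p)\to0$ as soon as the probabilities on $S$ converge to those of $q$ and the mass off $S$ vanishes. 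A minor additional point is that when the $n_1+1$ prescribed pairs are not pairwise disjoint the support $S$ is smaller and the grafting argument must be adapted (or reduced to the disjoint case); these estimates are carried out in~\citet{Montufar2011}.
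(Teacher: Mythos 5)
Your induction on the number of hidden units---each added unit contributing the factor $1+e^{\,c+\bfx_0^\top w}$, sharpened so that it tends to $1$ off a chosen adjacent pair and calibrated on that pair to graft the prescribed mass---is sound and is essentially the standard argument for this proposition; note the paper itself does not reprove it but imports it from \citet{Montufar2011}, where precisely this incremental construction (with the coupled limit schedule and the non-disjoint-pair bookkeeping you flag) is carried out. So your proposal matches the intended proof route, with the two flagged points being real but routine to complete.
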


Feedforward layers have been studied in previous papers as well and we can take advantage of the tools that have been developed there. 
The following Proposition~\ref{proposition:sharing}~\citep[taken from][]{Montufar2014dbn} describes the augmentation of an input set $\Delta_n(S)$ to an output set $\Delta_n(S\cup P)$. 
The {\em flip} of a state vector $\bfx$ along $j$ is the vector $\bfx_{\bar j}$ that results from inverting the $j$-th entry of $\bfx$. 
\begin{proposition} 
	\label{proposition:sharing}
	The image of $\Delta_{n}(S)$ by $\FF_{n,n}$ can approximate every distribution from $\Delta_{n}(S\cup P)$ arbitrarily well, 
	where $P\subseteq\{0,1\}^n$ is any set of the following form. 
	Take $n$ disjoint pairs of adjacent states $p^1,\ldots, p^{n}$ and $n$ distinct directions $i_1,\ldots,i_{n}$. 
	Intersect each pair $p^j$ with $S$ and flip the result along the direction $i_j$, 
	to obtain $\bar p^1 =(S\cap p^1)_{\bar i_1},\ldots, \bar p^n =(S\cap p^n)_{\bar i_n}$. 
	Set $P=\{\bar p^1,\ldots, \bar p^n\}$. 
\end{proposition}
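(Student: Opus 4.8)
\emph{Proof proposal.} The plan is to make the feedforward layer act as an (approximately) deterministic copy of its input that is deliberately randomized along the $n$ prescribed directions, and to read off from a target $q^*\in\Delta_{n}(S\cup P)$ both a suitable input distribution on $S$ and the amount of randomization needed.

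First I would set up a transport plan. Since we only want to reach $S\cup P$, one may assume the sets $\bar p^j\setminus S$ are pairwise disjoint and nonempty (redundant or empty contributions are dropped), so that each genuinely new state $\bar{\bfz}\in\bar p^j\setminus S$ has a unique preimage $\mathbf{y}:=\bar{\bfz}_{\bar i_j}\in S\cap p^j$; call such a $\mathbf{y}$ \emph{active for $j$}. Because the pairs $p^1,\dots,p^n$ are disjoint, every $\mathbf{y}\in S$ is active for at most one index. Define $p\in\Delta_{n}(S)$ by $p(\mathbf{y})=q^*(\mathbf{y})+q^*(\mathbf{y}_{\bar i_j})$ if $\mathbf{y}$ is active for $j$ and $p(\mathbf{y})=q^*(\mathbf{y})$ otherwise; then $\sum_{\mathbf{y}\in S}p(\mathbf{y})=\sum_{\bfx\in S\cup P}q^*(\bfx)=1$. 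The layer must send each $\mathbf{y}\in\mathrm{supp}(p)$ to the two-point mixture $\tfrac{q^*(\mathbf{y})}{p(\mathbf{y})}\,\delta_{\mathbf{y}}+\tfrac{q^*(\mathbf{y}_{\bar i_j})}{p(\mathbf{y})}\,\delta_{\mathbf{y}_{\bar i_j}}$ when $\mathbf{y}$ is active for $j$, and to $\delta_{\mathbf{y}}$ otherwise; composing this stochastic map with $p$ returns $q^*$, and every output lies in $S\cup P$.

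Next I would realize this map inside $\FF_{n,n}$. Write the conditional in product form, $q_{\bfW_0,\bfB_0}(\bfx_0\mid\bfx_1)=\prod_{k=1}^{n}\sigma(\eta_k(\bfx_1))^{x_{0,k}}\bigl(1-\sigma(\eta_k(\bfx_1))\bigr)^{1-x_{0,k}}$, with $\eta_k(\bfx_1)=\langle\bfw_k,\bfx_1\rangle+b_k$ affine and $\sigma$ logistic. The distinct directions $i_1,\dots,i_n$ exhaust $[n]$; coordinate $k=i_j$ is the one to be randomized for $p^j$ and must copy the input for every other state of $S$. For a large parameter $M$ I would take $\eta_{i_j}(\bfx_1)=M\,\psi_j(\bfx_1)+c_j(\bfx_1)$, where $c_j$ is a bounded affine term supplying the required finite logit value(s) on the active state(s) of $p^j$ (the single coordinate in which the two states of $p^j$ differ suffices to give them different values), and $\psi_j$ is a Hamming-type affine function that vanishes exactly on the active state(s), is strictly positive on every remaining state of $S$ with $i_j$-th coordinate $1$, and strictly negative on every remaining state of $S$ with $i_j$-th coordinate $0$; e.g.\ if the active state $\mathbf{y}$ has $y_{i_j}=1$ one can use $\psi_j(\bfx)=2(x_{i_j}-1)+\epsilon\sum_{k\neq i_j}|x_k-y_k|$ with $\epsilon<2/(n-1)$, the other cases being analogous. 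Letting $M\to\infty$ with the $c_j$'s and $\epsilon$ fixed, $q_{\bfW_0,\bfB_0}(\cdot\mid\bfx_1)$ converges, for every $\bfx_1\in S$, to exactly the stochastic map of the transport plan, so the output $p_M:=\sum_{\bfx_1}p(\bfx_1)\,q_{\bfW_0,\bfB_0}(\cdot\mid\bfx_1)$ converges entrywise to $q^*$. Since $p_M$ is strictly positive for every finite $M$ and $q^*(\bfx)>0$ on $\mathrm{supp}(q^*)$, this forces $D(q^*\|p_M)=\sum_{\bfx\in\mathrm{supp}(q^*)}q^*(\bfx)\log\tfrac{q^*(\bfx)}{p_M(\bfx)}\to 0$, giving the claimed approximation.

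The hard part will be the construction of the $\eta_{i_j}$: one has to check that these $n$ affine functions can be chosen \emph{simultaneously} with the stated extreme/moderate profile over $S$, and this is exactly where the hypotheses do their work. The \emph{adjacency} of the two states of each $p^j$ is what lets a single affine $\psi_j$ pass through them while separating ``$x_{i_j}=1$'' from ``$x_{i_j}=0$'' (a cube vertex can always be threaded onto such a separating hyperplane), and \emph{disjointness of the pairs together with distinctness of the directions} is what guarantees that no coordinate is randomized for two pairs at once, so that outputs stay inside $S\cup P$ (no ``double flips'') and the coordinatewise constructions do not interfere. Carrying out the bookkeeping exactly right — the reduction to pairwise disjoint, genuinely new $\bar p^j$, and the sub-cases of how many states of a given pair are active — is the delicate point, whereas the passage to the limit is routine.
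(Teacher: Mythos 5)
The paper does not actually prove this proposition: it is imported verbatim from the cited reference (Montufar, 2014, on universal approximation by narrow belief networks), so there is no in-paper proof to compare against. Your argument is a correct self-contained reconstruction along the same lines as that source: route the mass of each genuinely new state through its unique in-pair preimage (disjointness of the pairs and distinctness of the directions make the transport plan well defined and coordinatewise non-interfering), and realize the resulting ``copy everywhere, randomize bit $i_j$ only on the active state(s) of $p^j$'' kernel with sigmoid units whose logits are $M\psi_j + c_j$, $M\to\infty$. Your sign analysis of $\psi_j$ is right, the limit/KL step is fine since the finite-$M$ output is strictly positive, and the entrywise limit equals the target.

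The only loose end is the subcase you flagged: when both states $\mathbf{a},\mathbf{b}$ of a pair lie in $S$ and both flips are new, the formula $\psi_j(\bfx)=2(x_{i_j}-1)+\epsilon\sum_{k\neq i_j}|x_k-a_k|$ gives $\psi_j(\mathbf{b})=\epsilon\neq 0$, so in the limit bit $i_j$ would be copied rather than randomized on $\mathbf{b}$. The fix is exactly the ``analogous'' one you anticipate: drop from the penalty sum the coordinate $d_j$ in which $\mathbf{a}$ and $\mathbf{b}$ differ, i.e.\ take $\psi_j(\bfx)=2(x_{i_j}-1)+\epsilon\sum_{k\neq i_j,d_j}|x_k-a_k|$ (note $d_j\neq i_j$, else nothing new is produced), which vanishes exactly on $\{\mathbf{a},\mathbf{b}\}$ and keeps the required signs elsewhere, while $c_j(\bfx)=\alpha+\beta x_{d_j}$ supplies the two different moderate logits. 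With that amendment the proof is complete and matches the construction used in the cited work.
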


\citet{Montufar2011} show that, for any $k\in\mathbb{N}$ and $n=2^k + k + 1$, 
there is a choice of  $S^1$ of the form described in Proposition~\ref{proposition:RBM} (e.g., the set of all length-$n$ strings whose last $2^k$ entries are zero),  
and a sequence 
$S^2 = S^1\cup P^1,\ldots, S^L = S ^{L-1}\cup P^{L-1}$ of the form described in Proposition~\ref{proposition:sharing},  
such that $S^L = \{0,1\}^n$ for $L = \frac{2^{n-1}}{2^k}$. 
This implies the existence and sufficiency statements from Theorem~\ref{theorem}. 
The necessary condition results from straightforward parameter counting arguments; comparing the dimension $\dim(\Delta_n) = 2^n-1$ of the set being approximated and the number of parameters $L n^2 + (L+1)n$ of the DBM. 
This concludes the proof of Theorem~\ref{theorem}. 
Details on the other statements are given in the Supplementary Material. 

\section{Conclusion}
\label{sec:discussion}

This paper proves that undirected layered deep networks are, in a well defined sense, as powerful as their feedforward counterparts. 
We see this as an important contribution to developing better intuitions about the advantages and disadvantages of using feedforward vs. undirected architectures. The methods developed in this paper seem valuable for studying the effects of training undirected networks sequentially, or using the trained weights of a DBN to initialize a DBM. 

This paper proves the universal approximation property for narrow DBMs thereby settling an intuition that had been missing formal verification for a surprisingly long time. 
This complements previously known results addressing RBMs and narrow DBNs, which can be regarded the shallow and feedforward counterparts of narrow DBMs. 
%
%
We investigated the compositional structure of DBMs and presented a trick to separate the activities on the upper part of the network from those on the lower part of the network. 
This allowed us to trace parameter regions where DBMs can be regarded as feedforward networks, 
passing the probability distributions represented at the higher layers downwards from layer to layer by multiplication with independent conditional probability distributions which have the same form as those represented by feedforward layers. 
%


\subsubsection*{Acknowledgments}
I am grateful to the Santa Fe Institute, where I was hosted while working on this article.

\bibliography{referenzen}
\bibliographystyle{iclr2015}

\appendix

\section*{Supplementary Material}
\label{sec:appendix}

\subsection*{Proof of Proposition~\ref{proposition:product}}
The $k$-th layer marginal of the DBM satisfies
	\begin{eqnarray*}
		p(\bfx_k) 
		&=& \sum_{\bfx_0,\ldots, \bfx_{k-1},\bfx_{k+1}, \ldots, \bfx_L} p(\bfx_0,\bfx_1,\ldots, \bfx_L)\\
		&=& \sum_{\bfx_0,\ldots, \bfx_{k-1},\bfx_{k+1}, \ldots, \bfx_L}  \frac{1}{Z(\bfW,\bfB)}
		\exp\Big( \sum_{l=0}^{L-1}  \bfx_{l}^\top \bfW_{l} \bfx_{l+1}  + \sum_{l=0}^L \bfx_l^\top {\bfB_l}\Big)\\
		&=& \sum_{\bfx_0,\ldots, \bfx_{k-1},\bfx_{k+1}, \ldots, \bfx_L}  \frac{1}{Z(\bfW,\bfB)}
		\exp\Big( \sum_{l=0}^{k-1}  \bfx_{l}^\top \bfW_{l} \bfx_{l+1}  + \sum_{l=0}^{k-1} \bfx_l^\top {\bfB_l} + \bfx_k^\top {\bfB'_k}\Big)\\
		&&\phantom{\sum_{\bfx_0,\ldots, \bfx_{k-1},\bfx_{k+1}, \ldots, \bfx_L}  \frac{1}{Z(\bfW,\bfB)}} 
		\times\exp\Big( \sum_{l=k}^{L-1}  \bfx_{l}^\top \bfW_{l} \bfx_{l+1}  + \sum_{l=k}^L \bfx_l^\top {\bfB_l} -\bfx_k^\top \bfB'_k  \Big)\\
		&=& \frac{1}{Z(\bfW,\bfB)}
		\sum_{\bfx_0,\ldots, \bfx_{k-1}} 
		\exp\Big( \sum_{l=0}^{k-1}  \bfx_{l}^\top \bfW_{l} \bfx_{l+1}  + \sum_{l=0}^{k-1} \bfx_l^\top {\bfB_l} + \bfx_k^\top {\bfB'_k}\Big)\\
		&&\phantom{\frac{1}{Z(\bfW,\bfB)}}
		\times\sum_{\bfx_{k+1}, \ldots, \bfx_L}
		\exp\Big( \sum_{l=k}^{L-1}  \bfx_{l}^\top \bfW_{l} \bfx_{l+1}  + \sum_{l=k}^L \bfx_l^\top {\bfB_l} -\bfx_k^\top \bfB'_k  \Big)\\
		&=& \frac{1}{Z(\bfW,\bfB)} \; Z(\bfW^{(2)}, \bfB^{(2)} ) p^{(2)}(\bfx_k) \; Z(\bfW^{(1)} ,\bfB^{(1)})  p^{(1)}(\bfx_k), 
		\quad \text{for all $\bfx_k\in\{0,1\}^{n_k}$}. 
	\end{eqnarray*}
	This shows that for any $k$-th layer marginal $p(\bfx_k)$ representable by $\DBM$, 
	there is a distribution $p^{(2)}(\bfx_k)$ representable as the top layer marginal of $\DBM^{(2)}$ with parameters $\bfW^{(2)}= \{ \bfW_0,\ldots, \bfW_{k-1} \}$, $\bfB^{(2)} = \{ \bfB_0,\ldots, \bfB_{k-1}, \bfB'_k\}$, 
	and a distribution $p^{(1)}(\bfx_k)$ representable as the bottom layer marginal of $\DBM^{(1)}$ with parameters $\bfW^{(1)}=\{ \bfW_{k},\ldots, \bfW_{L-1}\}$,  $\bfB^{(1)}=\{ \bfB_k-\bfB'_k,\bfB_{k+1},\ldots,\bfB_L\}$, 
	such that the equation $p(\bfx_k) = (p^{(2)}\ast p^{(1)})(\bfx_k)$ holds, and vice versa. \qed

\subsection*{Approximation of stochastic maps}

A stochastic map with inputs $\{0,1\}^k$ and outputs $\{0,1\}^m$ assigns a probability distribution $p(\cdot| \bfi )\in\Delta_{m}$ to each input vector $\bfi\in\{0,1\}^{k}$. 
DBMs can be used to define such maps by clamping the states of some of their units to the input values $\bfi$, 
and taking the resulting conditional probability distribution over the states of some other units as the output distributions. 
One way of doing this is by dividing the visible units in two groups, corresponding to inputs and outputs, as $\bfx_0 = (\bfi,\bfo)$. 
Given that $p(\bfx_0) = p(\bfi,\bfo)$ stands in one to one relation to the pair $(p(\bfi), p(\bfo|\bfi))$, 
Corollary~\ref{corollary:stochastic} is a direct implication of Theorem~\ref{theorem}.  
%

Note that a universal approximator of stochastic maps is also a universal approximator of deterministic maps. 
Every deterministic map $\bfi \mapsto \bfo=f(\bfi)$ can be regarded as the special type of stochastic map $\bfi \mapsto \delta_{f(\bfi)}(\bfo)$, where $\delta_{f(\bfi)}$ is the Dirac delta assigning probability one to $\bfo = f(\bfi)$. 

Corollary~\ref{corollary:stochastic} complements previous results addressing universal approximation of stochastic maps by conditional RBMs~\citep{Maaten2011, montufar2014expressive}. 
As discussed in~\citep{montufar2014expressive}, in contrast to joint probability distributions, stochastic maps do not need to model the input distributions, 
and hence universal approximators of stochastic maps need not be universal approximators of joint probability distributions. 
It would be interesting to investigate corresponding refinements of Corollary~\ref{corollary:stochastic} in future work.  

\subsection*{Softmax units}

All arguments presented in the main part of this article hold for arbitrary finite valued units (not only binary units). 
An analysis of sequences of feedforward layers of finite valued units is available from~\citep{Montufar2014dbn}. 
This allows us to formulate Theorem~\ref{theoremdiscrete} as a direct generalization of Theorem~\ref{theorem}. 
The result can be further refined to cases where each layer has units with different numbers of possible states. We omit further details at this point. 

\subsection*{Minimal width of universal approximators}

In a layered network, a too narrow layer represents a bottleneck. 
It is an interesting question how narrow a universal approximator can be. 
Proposition~\ref{proposition:narrow} shows that if the visible layer has $n_0$ units, then the first hidden layer of a universal approximator must have at least $n_1\geq n_0-1$ units.  
In fact, when $n_0$ is odd, this has to be at least $n_1\geq n_0$. 

\begin{proof}[Proof of Proposition~\ref{proposition:narrow}]
	This follows from the fact that the visible distributions of the DBM are mixtures of the conditionals $p(\bfx_0|\bfx_1)$, for all $\bfx_1\in\{0,1\}^{n_1}$. 
	Each of these conditional distributions is a product distribution. 
	There are distributions on $\{0,1\}^{n_0}$ that can only be approximated by mixtures of product distributions, 
	if these mixtures involve mixture components that approximate all point measures assigning probability one to the binary strings with an odd number of ones~\citep[see][]{Montufar2010a}. 
	
	Now,~\citet[][Proposition~3.19]{MontufarMorton2012} show that when $n_0$ is odd, there is no $(n_0-1)$-generated zonoset with a point in each odd (or each even) orthant of $\mathbb{R}^{n_0}$. 
	Without going into more details, this implies that, when $n_1=n_0-1$, with odd $n_0$, 
	the set of conditionals $\{ p(\bfx_0|\bfx_1) \colon \bfx_1\in\{0,1\}^{n_1}\}$ cannot approximate the set of point measures that assign probability one to the binary strings with an odd (or even) number of ones. 
\end{proof}
We note that the same width bound holds for DBNs, since the visible distributions represented by DBNs are mixtures of the same product distributions as the visible distributions of DBMs. 

\subsection*{Comparison with narrow DBNs}

DBNs have the same network topology as DBMs, but with interactions directed towards the bottom layer, 
except for the interactions between the deepest two layers, which are undirected. 
The corresponding joint probability distributions have the form 
\begin{multline}
p_{\bfW,\bfB}(\bfx_0,\bfx_1\ldots,\bfx_L) = 
p_{\bfW_{L-1},\bfB_{L-1},\bfB_L} (\bfx_{L-1},\bfx_L)  
\prod_{l=0}^{L-2} p_{\bfW_{l},\bfB_l}( \bfx_l | \bfx_{l+1} ), \\
\quad\text{for all $(\bfx_0,\ldots, \bfx_L)\in\{0,1\}^{n_0+\cdots+n_L}$}. 
\label{eq:defjointdbn}
\end{multline}
Here the distributions of the states in the deepest two layers are given by  
\begin{multline}
p_{\bfW_{L-1},\bfB_{L-1},\bfB_L} (\bfx_{L-1},\bfx_L)  
=  \frac{1}{Z(\bfW_{L-1},\bfB_{L-1}, \bfB_L)} \exp( \bfx_{L-1}^\top \bfW_{L-1} \bfx_{L}  + \bfx_{L-1}^\top {\bfB_{L-1}} + \bfx_L^\top \bfB_L ), \\ 
\quad\text{for all $(\bfx_{L-1},\bfx_L)\in\{0,1\}^{n_{L-1}+n_L}$}.  
\end{multline}
The conditional distributions (feedforward layers), are given by 
\begin{multline}
p_{\bfW_{l},\bfB_l}( \bfx_l | \bfx_{l+1} ) 
= \frac{1}{Z(\bfW_{l}\bfx_{l+1},\bfB_{l})} \exp( \bfx_{l}^\top \bfW_{l} \bfx_{L}  + \bfx_{l}^\top {\bfB_{l}}), \\ 
\quad\text{for all $\bfx_{l} \in\{0,1\}^{n_l}$, for all $\bfx_{l+1}\in\{0,1\}^{n_{l+1}}$}. 
\end{multline}

Although DBNs have undirected interactions between the top two layers, in the narrow case the universal approximation capability stems essentially from the feedforward part. 
A DBN with layers of width $n$ is a universal approximator if the number of hidden layers satisfies $L\geq \frac{2^n}{2(n-\log_2(n)-1)}$ and only if $L\geq \frac{2^n -(n+1)}{n(n+1)}$~\citep{Montufar2011}. 
These bounds correspond exactly to the bounds we obtained in Theorem~\ref{theorem} for DBMs. 
In our proof we showed that the kinds of transformations of probability distributions exploited in~\citep{Montufar2011} in the context of DBNs can also be represented by DBMs. 
In particular, our analysis shows that many distributions that are representable by DBNs are also representable by DBMs of the same size. 

\subsection*{Comparison with RBMs}

In the case of one single hidden layer, the DBM reduces to an RBM. 
RBMs are universal approximators, provided the hidden layer contains sufficiently many units. 
The minimal number of hidden units $m$ for which an RBM with $n$ visible units is a universal approximator is at least $\frac{2^n-n}{n+1}$ and at most $2^{n-1}-1$~\citep{Montufar2011}. 
The exact value is not known, but there are examples where the lower bound is not attained. 
For narrow DBMs we obtained an upper bound on the minimal number of layers sufficient for universal approximation of the form $L\geq 2^n/ 2(n - \log_2(n) -1)$. 
Hence both RBMs and narrow DBMs require at most a number of interaction weights and biases of order $O (n 2^{n-1}) $. 
We should note that in both cases, it is possible to formulate restrictions on the interaction weights and biases in such a way that the total number of free parameters needed for universal approximation is $2^n-1$, i.e.,  just as large as the dimension of the set~$\Delta_n$. 

\subsection*{Exploiting the backward activity}

The product $r\ast s$ arising in Proposition~\ref{factorization} can be used to augment the input model that is passed to the feedforward layer. 
As long as this does not interfere with the choice of a desirable conditional $q$, this could be exploited to obtain a more compact construction of a universal approximator. 
Investigating this in detail could help us better understand the differences of DBNs and DBMs. It would be interesting to take a closer look at this in future work.

\end{document}